\let\mathbb\varmathbb
\crefname{lemma}{Lemma}{Lemmas}
\crefname{fact}{Fact}{Facts}
\crefname{theorem}{Theorem}{Theorems}
\crefname{corollary}{Corollary}{Corollaries}
\crefname{claim}{Claim}{Claims}
\crefname{example}{Example}{Examples}
\crefname{algorithm}{Algorithm}{Algorithms}
\crefname{problem}{Problem}{Problems}
\crefname{definition}{Definition}{Definitions}
\crefname{exercise}{Exercise}{Exercises}
\crefname{model}{Model}{Models}
\newtheorem{theorem}{Theorem}[section]
\newtheorem*{theorem*}{Theorem}
\newtheorem{lemma}[theorem]{Lemma}
\newtheorem*{lemma*}{Lemma}
\newtheorem{fact}[theorem]{Fact}
\newtheorem*{fact*}{Fact}
\newtheorem{proposition}[theorem]{Proposition}
\newtheorem*{proposition*}{Proposition}
\newtheorem*{corollary*}{Corollary}
\newtheorem*{hypothesis*}{Hypothesis}
\newtheorem{conjecture}[theorem]{Conjecture}
\newtheorem*{conjecture*}{Conjecture}
\theoremstyle{definition}
\newtheorem{definition}[theorem]{Definition}
\newtheorem*{definition*}{Definition}
\newtheorem*{construction*}{Construction}
\newtheorem*{example*}{Example}
\newtheorem{question}[theorem]{Question}
\newtheorem*{question*}{Question}
\newtheorem*{algorithm*}{Algorithm}
\newtheorem{assumption}[theorem]{Assumption}
\newtheorem*{assumption*}{Assumption}
\newtheorem*{problem*}{Problem}
\newtheorem*{openquestion*}{Open Question}
\theoremstyle{remark}
\newtheorem*{claim*}{Claim}
\newtheorem*{remark*}{Remark}
\newtheorem*{observation*}{Observation}
\theoremstyle{model}
\newtheorem{model}[theorem]{Model}
\newtheorem*{model*}{Model}
\let\originalleft\left
\let\originalright\right
\renewcommand{\left}{\mathopen{}\mathclose\bgroup\originalleft}
\renewcommand{\right}{\aftergroup\egroup\originalright}
\let\latexparagraph\paragraph
\RenewDocumentCommand{\paragraph}{som}{%
  \IfBooleanTF{#1}
    {\latexparagraph*{#3}}
    {\IfNoValueTF{#2}
       {\latexparagraph{\maybe@addperiod{#3}}}
       {\latexparagraph[#2]{\maybe@addperiod{#3}}}%
  }%
}
\newcommand{\maybe@addperiod}[1]{%
  #1\@addpunct{.}%
}
\newcommand{\paren}[1]{(#1)}
\newcommand{\Paren}[1]{\left(#1\right)}
\newcommand{\Brac}[1]{\left[#1\right]}
\newcommand{\Bigbrac}[1]{\Big[#1\Big]}
\newcommand{\Abs}[1]{\left\lvert#1\right\rvert}
\newcommand{\Set}[1]{\left\{#1\right\}}
\newcommand{\norm}[1]{\lVert#1\rVert}
\newcommand{\Norm}[1]{\left\lVert#1\right\rVert}
\newcommand{\iprod}[1]{\langle#1\rangle}
\newcommand{\Esymb}{\mathbb{E}}
\newcommand{\Psymb}{\mathbb{P}}
\DeclareMathOperator*{\E}{\Esymb}
\DeclareMathOperator*{\ProbOp}{\Psymb}
\renewcommand{\Pr}{\ProbOp}
\newcommand\bdot\bullet
\DeclareMathOperator{\poly}{poly}
\DeclareMathOperator{\polylog}{polylog}
\newcommand{\iid}{i.i.d.\xspace}
\newcommand{\N}{\mathbb N}
\newcommand{\R}{\mathbb R}
\newcommand{\cP}{\mathcal P}
\newcommand{\cQ}{\mathcal Q}
\newcommand{\cS}{\mathcal S}
\renewcommand{\leq}{\leqslant}
\renewcommand{\geq}{\geqslant}
\renewcommand{\ge}{\geqslant}
\let\epsilon=\varepsilon
\numberwithin{equation}{section}
\newcommand\MYcurrentlabel{xxx}
\newcommand{\MYstore}[2]{%
  \global\expandafter \def \csname MYMEMORY #1 \endcsname{#2}%
}
\newcommand{\MYload}[1]{%
  \csname MYMEMORY #1 \endcsname%
}
\newcommand{\MYnewlabel}[1]{%
  \renewcommand\MYcurrentlabel{#1}%
  \MYoldlabel{#1}%
}
\newcommand{\MYdummylabel}[1]{}
\newcommand{\torestate}[1]{%
  \let\MYoldlabel\label%
  \let\label\MYnewlabel%
  #1%
  \MYstore{\MYcurrentlabel}{#1}%
  \let\label\MYoldlabel%
}
\newcommand{\restatetheorem}[1]{%
  \let\MYoldlabel\label
  \let\label\MYdummylabel
  \begin{theorem*}[Restatement of \cref{#1}]
    \MYload{#1}
  \end{theorem*}
  \let\label\MYoldlabel
}
\newcommand{\restatelemma}[1]{%
  \let\MYoldlabel\label
  \let\label\MYdummylabel
  \begin{lemma*}[Restatement of \cref{#1}]
    \MYload{#1}
  \end{lemma*}
  \let\label\MYoldlabel
}
\newcommand{\restateprop}[1]{%
  \let\MYoldlabel\label
  \let\label\MYdummylabel
  \begin{proposition*}[Restatement of \cref{#1}]
    \MYload{#1}
  \end{proposition*}
  \let\label\MYoldlabel
}
\newcommand{\restatefact}[1]{%
  \let\MYoldlabel\label
  \let\label\MYdummylabel
  \begin{fact*}[Restatement of \cref{#1}]
    \MYload{#1}
  \end{fact*}
  \let\label\MYoldlabel
}
\newcommand{\restatedefinition}[1]{%
  \let\MYoldlabel\label
  \let\label\MYdummylabel
  \begin{definition*}[Restatement of \cref{#1}]
    \MYload{#1}
  \end{definition*}
  \let\label\MYoldlabel
}
\newcommand{\restate}[1]{%
  \let\MYoldlabel\label
  \let\label\MYdummylabel
  \MYload{#1}
  \let\label\MYoldlabel
}
\newcommand{\sse}{\subseteq}
\newcommand*{\Id}{\mathrm{Id}}
\newcommand{\secret}{x^*}
\newcommand{\design}{\bm A}
\newcommand{\estimator}{\hat{\bm x}}
\newcommand{\nspca}{\mathrm{NegSPCA}}
\newcommand{\pairnspca}{\mathrm{PairNegSPCA}}
\newcommand{\concat}{\operatorname{concat}}
\title{
  Computational-Statistical Gaps for Improper Learning in Sparse Linear Regression
}
 \author{
   Rares-Darius Buhai\thanks{ETH Z\"urich.}
   \and
   Jingqiu Ding\footnotemark[1]
   \and
   Stefan Tiegel\footnotemark[1]
}
\begin{document}

\pagestyle{empty}

\maketitle
\thispagestyle{empty} %

\begin{abstract}
    We study computational-statistical gaps for improper learning in sparse linear regression.
More specifically, given $n$ samples from a $k$-sparse linear model in dimension $d$, we ask what is the minimum sample complexity to efficiently (in time polynomial in $d$, $k$, and $n$) find a potentially dense estimate for the regression vector that achieves non-trivial prediction error on the $n$ samples.
Information-theoretically this can be achieved using $\Theta(k \log (d/k))$ samples.
Yet, despite its prominence in the literature, there is no polynomial-time algorithm known to achieve the same guarantees using less than $\Theta(d)$ samples without additional restrictions on the model.
Similarly, existing hardness results are either restricted to the proper setting, in which the estimate must be sparse as well, or only apply to specific algorithms.

We give evidence that efficient algorithms for this task require at least (roughly) $\Omega(k^2)$ samples.
In particular, we show that an improper learning algorithm for sparse linear regression can be used to solve sparse PCA problems (with a negative spike) in their Wishart form, in regimes in which efficient algorithms are widely believed to require at least $\Omega(k^2)$ samples.
We complement our reduction with low-degree and statistical query lower bounds for the sparse PCA problems from which we reduce.

Our hardness results apply to the (correlated) random design setting in which the covariates are drawn \iid from a mean-zero Gaussian distribution with unknown covariance.
\end{abstract}

\clearpage

\microtypesetup{protrusion=false}
\tableofcontents{}
\microtypesetup{protrusion=true}

\clearpage

\pagestyle{plain}
\setcounter{page}{1}

\section{Introduction}

We study computational-statistical gaps in sparse linear regression models with (correlated) random designs.
In particular, on receiving $n$ samples $(\bm a_i, \bm y_i)$ for $\bm y_i = \iprod{\bm a_i, x^*} + \bm w_i$ where $\bm w_i$ is Gaussian noise and $x^*$ is an unknown sparse vector, we wish to find an estimate $\hat{\bm x}$ for $x^*$ that produces predictions $\langle \bm a_i, \hat{\bm x}\rangle$ close to $\langle \bm a_i, x^*\rangle$ for the given samples.
We consider the setting in which the covariates $\bm a_i$ are drawn \iid from a (correlated) Gaussian distribution (cf.~\Cref{def:SLR_model_gauss_design}) -- this is referred to as the \emph{random design} setting.

Despite its prominence, the computational complexity of sparse linear regression with respect to general estimators remains poorly understood.
Without further restrictions on the model, no efficient algorithms are known that use $o(d)$ samples.
Similarly, there is a dearth of hardness results:
Known lower bounds apply only against specific algorithms or rule out so-called \emph{proper} learners, in which the output of the estimator needs to be sparse as well.
In particular, to the best of our knowledge, there is no evidence ruling out efficient algorithms producing potentially dense estimates that use only the information-theoretically minimal number of samples.
We call such estimators \emph{improper}.
The distinction between proper and improper learners can be substantial:
There are learning tasks that are $\mathrm{NP}$-hard in the proper setting but polynomial-time solvable using an improper learner~\cite{valiant1984theory,pitt1988computational}.\footnote{The learning tasks are 3-Term DNFs that are known to be efficiently learnable via 3-CNFs yet are $\mathrm{NP}$-hard to learn properly.}
Thus, hardness against proper learners only gives limited evidence for the hardness of a learning task.

Similarly, many of the known lower bounds only apply to the more stringent setting in which the covariates are allowed to be worst-case (worst-case designs)~\cite{zhang2014LBparameter,zhang2015optimal}.
A priori the random design setting may seem much more benign than the worst-case setting.
Computational hardness results for worst-case designs, albeit restricted to proper learners, have been known for nearly a decade~\cite{zhang2014LBparameter}, yet obtaining any computational lower bounds for random Gaussian designs is a major open question explored in \cite{kelner2022power,kelner2022lower}.
In this work, we show hardness results for improper learners that hold even in the restrictive random design setting.

More specifically, we consider the following model:
\begin{model}[Sparse linear regression with Gaussian design]
    \label{def:SLR_model_gauss_design}
    Let $d,k,n \in \N$ and $\sigma \in \R_{\geq 0}$ be known.
    Let $\Sigma$ be an unknown positive semi-definite matrix and $x^* \in \R^d$ be some unknown $k$-sparse vector, i.e., that has at most $k$ non-zero entries.
    We draw $n$ \iid samples $(\bm a_1,\bm y_1), \ldots, (\bm a_n,\bm y_n) \in \R^d \times \R$ from the following linear model:
    Independently draw $\bm a_i \sim N(0,\Sigma)$ and $\bm w_i \sim N(0,\sigma^2)$.
    Output the $(\bm a_i,\bm y_i)$, where $\bm y_i = \iprod{\bm a_i, x^*} + \bm w_i$.

    Let $\bm A \in \R^{n \times d}$ be the matrix that has the $\bm a_i$ as rows and let $\bm y \in \R^n$ be the vector that has entries $\bm y_i$.
    We say an algorithm achieves \emph{prediction error} $\rho$ if it outputs a (potentially dense) vector $\hat{\bm x} \in \R^d$ such that $\tfrac 1 n \norm{\bm Ax^* - \bm A  \hat{\bm x}}^2 \leq \rho$.
\end{model}
We refer to the matrix $\bm A$ as the design matrix.
In what follows, we will focus on algorithms achieving prediction error $\rho = 0.01$.

Information-theoretically, this problem is well-understood: $\Theta\paren{k \log(d/k)}$ samples are both necessary and sufficient to achieve prediction error $0.01$ with, say, probability 0.99~\cite{raskutti2011minimax}.
Unfortunately, the algorithm achieving the upper bound -- known as the best-subset-selection estimator -- relies on an exhaustive search over $k$-sparse vectors and requires time $d^{\Omega(k)}$.
This exponential-in-$k$ running time is prohibitively large as soon as $k$ is only slightly larger than constant.
Instead, we would like algorithms running in time $\poly(d,n,k)$.
We call such algorithms efficient.
All efficient algorithms for this task use $\Omega(d)$ samples, yet there is no hardness result giving evidence that even $\omega(k \log(d/k))$ samples are necessary.
This leads us to the main question of this work:
\begin{question}
    \label{question:main}
    What is the best-possible sample complexity for computationally efficient algorithms in~\Cref{def:SLR_model_gauss_design} achieving prediction error 0.01?
\end{question}
We make progress on this question by presenting evidence that efficient algorithms need $\Omega(k^2)$ samples, even in the improper setting.
This leads to an intriguing state of affairs:
Many problems involving sparsity exhibit a $k$-vs-$k^2$ statistical-computational gap.
That is, information-theoretically the problem can be solved with (roughly) $k$ samples, yet computationally efficient algorithms likely require $k^2$ samples (we refer to~\cite{brennan2020reducibility} and the references therein for a more detailed treatment).
Our results provide evidence that this might also be the case for sparse linear regression.\footnote{However, we remark that for sparse linear regression there is no known algorithm using $O(k^2)$ or even $o(d)$ samples.}
Our hardness results come in the form of reductions from problems that are believed to be computationally intractable.

\paragraph{Known lower bounds and our approach}
Known lower bounds have focused on showing impossibility results for the harder task of (semi-)proper learning in which the algorithm has to output a $k'$-sparse estimate for $k \leq k' \leq k \cdot d^{o(1)}$.
Based on worst-case complexity assumptions,~\cite{zhang2014LBparameter,foster2015variable} show that for worst-case design matrices polynomial-time algorithms require $\omega(k \log d)$ samples.
While this settles~\Cref{question:main} for (semi-)proper learners, this does not say anything about the improper and/or random design case.
To the best of our knowledge, the only known lower bounds for this more general setting apply to (very) restricted families of algorithms \cite{zhang2015optimal,kelner2022power,kelner2022lower}.
While these include popular sparse regression algorithms such as LASSO and variations thereof, this only provides weak evidence of hardness: It remains unclear if an algorithm from a different family can circumvent the results.
In particular, in recent years algorithms based on spectral methods and the sum-of-squares hierarchy have seen an immense success in the design of computationally efficient algorithms for statistical estimation problems.
Such a general class of algorithms is not ruled out.

This calls for hardness evidence against more general classes of algorithms.
Unfortunately, and in contrast to the proper learning setting, there seem to be inherent barriers to basing hardness of improper learning (in average-case problems) on classical worst-case assumptions such as $\mathrm{P} \neq \mathrm{NP}$~\cite{applebaum2008basing}.
By now, the two most prevalent techniques for showing average-case hardness are the following:
First, show unconditional lower bounds in a restricted model of computation (that captures most known algorithms for the problem), such as statistical query (SQ)~\cite{kearns1998efficient,feldman2017statistical} or low-degree algorithms~\cite{hopkins2017efficient,hopkins2017power,HopkinsThesis}.
These algorithms capture most known algorithms for statistical inference problems (with the notable exception of Gaussian elimination and the LLL Algorithm~\cite{zadik2022lattice,diakonikolas2022non}).
For many average-case problems (such as graph recovery problems~\cite{barak2019nearly,hopkins2017efficient}, mean estimation and learning Gaussian Mixture Models~\cite{diakonikolas2017statistical} and spiked matrix models~\cite{hopkins2017power,ding2022subexponentialtime}), these frameworks have been successfully used to trace out a computational phase transition for a broad class of computational problems:
the statistical error rate achieved by existing polynomial-time algorithms can be matched by algorithms in the class, while obtaining better error rate is impossible for these algorithms.
The second approach is to show a reduction from a problem believed to be computationally hard~\cite{brennan2020reducibility,CLWE,gupte2022continuous}.
We remark that the latter can also be interpreted in a positive way as understanding the connections between different computational problems: If, say, we find a better sparse linear regression algorithm, we also find a better algorithm for another problem that we did not know before.

We follow the second approach.
In particular, we show a reduction from a slight variant of a widely studied PCA problem:
\begin{model}[Sparse spiked Wishart model]
    \torestate{
    \label{def:spiked_wishart}
    Let $d, k, n \in \N$ with $d \geq k$ and $\theta \in (0, 1)$.
    We define the $\nspca_{\theta}$ problem as the following distinguishing problem:
    We are given $n$ i.i.d.~samples from either $\cP$ or $\cQ$ defined as follows and want to decide whether they came from $\cP$ or $\cQ$.
    \begin{itemize}
        \item \textbf{Planted  distribution $\mathcal{P}$}: Sample a unit vector $\bm x \in \R^{d+1}$ with $\bm x_1=-\tfrac 1 {\sqrt{k+1}}$ and $\bm x_{\setminus 1}$ sampled uniformly from $k$-sparse $\Set{\pm \tfrac 1 {\sqrt{k+1}},0}^d$ vectors and fix it for the rest of the sampling procedure.
        Produce samples by sampling from $N(0, \Id_{d+1}-\theta \cdot \bm x\bm x^\top)$.
        \item \textbf{Null distribution $\mathcal{Q}$}: $N(0, \Id_{d+1})$.
    \end{itemize}
    }
\end{model}
We remark that, except for the condition that the first coordinate of $\bm x$ under $\cP$ is non-zero, this coincides with the classical problem of sparse PCA in the Wishart model with a negative spike that has a sparse prior.
Several variants of this problem are known to be hard using (roughly) less than $k^2$ samples: in the low-degree model, inside the sum-of-squares framework, and under a reduction from a variant of the planted clique assumption~\cite{hopkins2017power,ComplexityRIP,brennan2020reducibility}.
We make the following hardness assumption about~\Cref{def:spiked_wishart}, which we formally verify in the low-degree and statistical query model in~\Cref{sec:LD_lower_bound_SPCA,sec:SQWishart} (our proof follows closely the proof of~\cite{ComplexityRIP}).
\begin{assumption}\label{conj:hardSSW}
    Let $d,n,k \in \N$ with $d \geq k$ and let $0 < \delta \leq 0.1$ be an arbitrary absolute constant. 
    Suppose that $n = o(\min(d,k^{2-\delta}))$.
    Then, for any $\theta \in (0, 1)$, any algorithm that solves $\nspca_{\theta}$ in dimension $d$ using $n$ samples and has success probability at least $1-o(1)$ requires running time $d^{k^{\Omega(1)}}$.
\end{assumption}

\paragraph{Results}
We show that a polynomial-time algorithm for improper learning in sparse linear regression that uses $n$ samples can be turned into a polynomial-time algorithm for~\Cref{def:spiked_wishart} using the same number of samples.
In particular, our main result is the following:
\begin{theorem}[Main result, see reduction in~\Cref{thm:main_known_no_RE}]
    \label{thm:main}
    Let $d,n,k \in \N$ with $d \geq k$ and let $0 < \delta \leq 0.1$ be an arbitrary absolute constant.
    Suppose that $n = o(\min(d,k^{2-\delta}))$.
    If there is an improper learner for the Sparse Linear Regression Model with Gaussian Design (cf.~\Cref{def:SLR_model_gauss_design}) that uses $n$ samples and runs in time $d^{k^{o(1)}}$ and achieves prediction error better than 0.01 with probability at least $1-o(1)$, then~\Cref{conj:hardSSW} is false.
\end{theorem}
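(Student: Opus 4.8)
The plan is to reinterpret a sample from $\nspca_\theta$ as a sample from the sparse linear regression model, run the hypothesized improper learner, and turn its output into a distinguisher between $\cP$ and $\cQ$. Given $n$ i.i.d.\ samples $\bm z_1,\dots,\bm z_n\in\R^{d+1}$, write each as $\bm z_i=(\bm z_{i,1},\bm a_i)$ with $\bm a_i\in\R^d$ the last $d$ coordinates and regress the first coordinate on the rest. Under $\cP$, writing $\bm x=(\bm x_1,\bm v)$ with $\bm x_1=-\tfrac1{\sqrt{k+1}}$ fixed (so that $\norm{\bm v}^2=\tfrac{k}{k+1}$ is deterministic), the pair $(\bm a_i,\bm z_{i,1})$ is jointly Gaussian, so $\bm z_{i,1}=\iprod{\bm a_i,x^*}+\bm w_i$ with $\bm a_i\sim N(0,\Id_d-\theta\bm v\bm v^\top)$ and an independent $\bm w_i\sim N(0,\sigma^2)$; a Sherman--Morrison computation gives that $x^*=-\tfrac{\theta\bm x_1}{1-\theta\norm{\bm v}^2}\,\bm v$ is exactly $k$-sparse (its support is that of $\bm v$), that $\sigma^2=\tfrac{1-\theta}{1-\theta\norm{\bm v}^2}$ is a known constant, and that the explained variance equals $(x^*)^\top(\Id_d-\theta\bm v\bm v^\top)x^*=\Sigma_{11}-\sigma^2$ with $\Sigma_{11}=1-\theta\bm x_1^2$. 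Under $\cQ$ the coordinates are independent, so $\bm z_{i,1}\sim N(0,1)$ is independent of $\bm a_i\sim N(0,\Id_d)$. The first parameter choice is $\theta=1-\Theta(1/k)$ --- concretely $\theta=1-1/k$, giving $\sigma^2\to\tfrac12$: then the explained variance is $\Omega(1)\gg 0.01$, so learning the regression is nontrivial ($\hat{\bm x}=0$ and ordinary least squares both fail); $\sigma^2$ is a constant bounded away from $0$ and $1$; and $\Id_d-\theta\bm v\bm v^\top$ has restricted eigenvalue $\Theta(1/k)$, so the induced instance lies in the conjectured hard regime and in particular is not solved by LASSO with $o(k^2)$ samples.

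The delicate point is how to feed this to the learner. A naive train/test split of the $n$ samples fails against an \emph{improper} learner: since $n\ll d$, it can make the in-sample prediction error on the training design tiny while having arbitrarily large population error by adding a huge vector lying in the kernel of the training design, and then held-out error is uninformative. Instead I keep the design fixed and split the \emph{response} by Gaussian data fission: draw fresh $\bm\zeta_i\sim N(0,\sigma^2)$ and set $\bm y_i^{(1)}=\bm z_{i,1}+\bm\zeta_i$, $\bm y_i^{(2)}=\bm z_{i,1}-\bm\zeta_i$. Under $\cP$, conditionally on $\bm A$ the pair $(\bm y_i^{(1)},\bm y_i^{(2)})$ is jointly Gaussian with zero cross-covariance, so $(\bm A,\bm y^{(1)})$ is a genuine instance of \Cref{def:SLR_model_gauss_design} with $k$-sparse signal $x^*$ and known noise variance $2\sigma^2$, while $\bm y^{(2)}=\bm A x^*+\bm e$ with $\bm e$ independent of $(\bm A,\bm y^{(1)})$ --- hence independent of the learner's output $\hat{\bm x}$ on $(\bm A,\bm y^{(1)})$. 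I run the learner to get (with probability $1-o(1)$) some $\hat{\bm x}$ with $\tfrac1n\norm{\bm A x^*-\bm A\hat{\bm x}}^2<0.01$ and output ``planted'' iff $T:=\tfrac1n\norm{\bm y^{(2)}-\bm A\hat{\bm x}}^2$ lies below a threshold; the key is that $T$ sees only $\bm A\hat{\bm x}$, on which both the learner's guarantee and the fresh noise $\bm e$ act, so the ``junk in the kernel'' strategy is neutralized.

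It remains to separate the two cases. Under $\cP$, expanding $T=\tfrac1n\norm{(\bm A x^*-\bm A\hat{\bm x})+\bm e}^2$ and using $\bm e\perp\hat{\bm x}$, the cross term concentrates to $o(1)$, so $T\le 0.01+2\sigma^2+o(1)$. Under $\cQ$, conditionally on $(\bm A,\bm y^{(1)})$ the vector $\bm y^{(2)}$ is Gaussian with mean $\tfrac{1-\sigma^2}{1+\sigma^2}\bm y^{(1)}$ and coordinatewise variance $\tfrac{4\sigma^2}{1+\sigma^2}$, and this fresh component is independent of everything the learner can use; since we may assume $\tfrac1n\norm{\bm A\hat{\bm x}}^2=O(1)$ (else $T$ is already large), the same concentration gives $T\ge\tfrac{4\sigma^2}{1+\sigma^2}-o(1)$. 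For $\sigma^2$ a constant bounded away from $0$ and $1$ (say $\sigma^2=\tfrac12$) one has $2\sigma^2+0.01<\tfrac{4\sigma^2}{1+\sigma^2}$ with a constant gap, so a threshold in between distinguishes $\cP$ from $\cQ$ with probability $1-o(1)$, using $n$ samples and time $d^{k^{o(1)}}\cdot\poly(n,d)$. With $\theta=1-1/k\in(0,1)$ this contradicts \Cref{conj:hardSSW}. The main obstacle is exactly the step that forces the fission device --- extracting a usable conclusion from an \emph{improper} learner, whose in-sample prediction error need not control its out-of-sample error --- together with the parameter balancing, which must simultaneously keep the regression nontrivial ($\sigma^2<\Sigma_{11}$ by a constant and $0.01<\Sigma_{11}-\sigma^2$), keep the fission separation open ($\sigma^2$ away from $0$ and $1$), and keep $\nspca_\theta$ in its hard regime (which is why $\theta$ is pushed to $1-\Theta(1/k)$).
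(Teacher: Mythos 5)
Your proposal is correct, and it takes a genuinely different route from the paper. The paper resolves the known-variance obstacle by passing through an intermediate symmetrized problem $\pairnspca_\theta$ (distinguishing $\cP\times\cQ$ from $\cQ\times\cP$), forming the response $\bm y=\bm Z_1+\bm Z_{d+1}$ so that the noise variance equals $\sigma^2+1=1.5$ under \emph{both} hypotheses, invoking the learner once, and finally thresholding which of the two candidate columns $\design\estimator$ is closer to; this requires the separate \Cref{lemma:disinguishing-order} to go back from the paired problem to $\nspca_\theta$, and costs a $\sqrt\delta$ degradation in the success probability. You instead stay with $\nspca_\theta$ directly, regress the first coordinate on the rest, and handle the variance mismatch by Gaussian data fission: drawing fresh $\bm\zeta\sim N(0,\sigma^2\Id_n)$ and setting $\bm y^{(1)}=\bm z_1+\bm\zeta$, $\bm y^{(2)}=\bm z_1-\bm\zeta$, so that under $\cP$ one gets a genuine \Cref{def:SLR_model_gauss_design} instance of variance $2\sigma^2=1$ (with $\theta=\tfrac{k+1}{k+2}$) on which the learner's guarantee applies, while $\bm y^{(2)}$ supplies an independent test. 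The resulting test $T=\tfrac1n\norm{\bm y^{(2)}-\bm A\hat{\bm x}}^2$ gives $T\leq 0.01+2\sigma^2+o(1)=1.01+o(1)$ under $\cP$ and $T\geq\tfrac{4\sigma^2}{1+\sigma^2}-o(1)=\tfrac43-o(1)$ under $\cQ$ (since $\bm y^{(2)}\mid\bm A,\bm y^{(1)}$ is Gaussian with conditional variance $\tfrac{4\sigma^2}{1+\sigma^2}$ and $T$ stochastically dominates the pure-noise term), a constant gap. Your observation that a naive train/test split fails against improper learners, and that the fission must be applied to the response rather than the rows, is exactly the right diagnosis. What your route buys: no intermediate paired problem and no \Cref{lemma:disinguishing-order}, hence a tighter $1-O(\delta)-\exp(-\Omega(n))$ success probability instead of $1-\sqrt{2\delta}-\exp(-\Omega(n))$. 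What the paper's route buys: the learner is invoked on an instance whose declared noise variance matches the truth under \emph{both} hypotheses, whereas in your reduction under $\cQ$ the actual noise variance is $1+\sigma^2\neq 2\sigma^2$; this is harmless for you since under $\cQ$ you never invoke the learner's guarantee, but it is worth stating explicitly. One small cleanup: you should fix $\theta=\tfrac{k+1}{k+2}$ (so $\sigma^2=\tfrac12$ exactly) rather than $\theta=1-1/k$ (which gives $\sigma^2=\tfrac{k+1}{2k}$) so that the declared SLR variance $2\sigma^2$ is exactly $1$, matching \Cref{thm:main_known_no_RE}; otherwise a harmless rescaling step is needed.
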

Note that this result implies that efficient algorithms for improperly learning sparse linear regression models likely need (roughly) $\Omega(\min(k^2,d))$ samples, whereas information-theoretically, only $O(k\log(d/k))$ samples are necessary.
Without additional assumptions on the model (cf.~\Cref{sec:other_hardness_results}) no efficient algorithm is known that uses $o(d)$ samples for this task.
We remark that when we only want to rule out algorithms that achieve prediction error 0.01 with probability at least $1-\Omega(\tfrac 1 d)$, we can reduce from a version of~\Cref{def:spiked_wishart} in which $\bm x$ is sampled uniformly from all $(k+1)$-sparse vectors in $\Set{\pm \tfrac 1 {\sqrt{k+1}},0}$ (without the first coordinate being known).\footnote{We remark that we cannot combine this last reduction with the reductions in~\cite{brennan2020reducibility} to obtain a reduction from (secret-leakage) planted clique. The reason is that the parameters in the reduction from the latter to~\Cref{def:spiked_wishart} are not sufficiently strong. Specifically,~\cite{brennan2020reducibility} requires $\theta = o(1)$, while we need $\theta$ to be an absolute constant close to $1$.}

It turns out that LASSO can solve our hard instance with $O(k^2 \log d)$ samples, so our lower bound is tight up to logarithmic factors. 
Therefore a different construction would be needed to obtain even stronger lower bounds.

\subsection{Relation to known algorithms and other hardness results}
\label{sec:other_hardness_results}

We outline here how our result compares to other algorithmic and hardness results that have been obtained in prior works.

As stated before, known algorithms achieve prediction error 0.01 using $o(d)$ samples only with additional assumptions on the model.
In particular, under the assumption that the columns are normalized to have squared norm $n$, the LASSO estimator is known to achieve prediction error 0.01 using $O(\norm{x^*}_1^2 \log d)$ samples~\cite[Theorem 7.20]{wainwright_2019}.
This gives a good sample complexity if the $\ell_1$-norm of the secret is bounded.
Second, if the design matrix is known to satisfy the so-called \emph{restricted eigenvalue condition} (with the same normalization of the columns), LASSO is known to succeed with few samples.
In particular, let $A$ be the design matrix.
Assuming that (roughly) for all $k$-sparse unit vectors $u$ it holds that $\tfrac 1 n \norm{A u}^2 \geq \gamma$, LASSO achieves prediction error 0.01 using $O(\tfrac{k \log d} \gamma)$ samples.
$\gamma$ is referred to as the RE constant.

Further,~\cite{kelner2022power} showed that in the Gaussian design setting that we consider in this paper, the LASSO estimator combined with a preconditioning step achieves nearly optimal sample complexity whenever the dependency structure of the covariates satisfies a specific regularity condition.

For lower bounds, the results closest to us are the following:
\cite{zhang2014LBparameter} showed that, assuming $\mathrm{NP} \not\sse \mathrm{P}_{/\poly}$, the dependence of the RE constant achieved by the LASSO is tight for  all polynomial-time proper learners.
In particular, they construct instance in which the RE constant can be exponentially small and obtain a corresponding lower bound on the sample complexity of proper learners achieving prediction error $0.01$.
Under other complexity theoretic assumptions~\cite{foster2015variable} showed that there is no polynomial-time estimator for this task, even if the output vector is allowed to be $O(k \cdot 2^{\log^{1-\delta}(d)})$-sparse for any constant $\delta > 0$.
Note that this notion is still significantly more restrictive than ours, as it does note rule out outputting even a $d^{0.001}$-sparse vector.
The design matrices in both the above works are worst-case.

\cite{zhang2015optimal} showed that a restricted class of convex M-estimators (including the LASSO estimator) cannot achieve optimal prediction error (under worst-case designs).
While the algorithms they rule out are improper, their result does not make any claims beyond the specific algorithms they consider.
To the best of our knowledge, the only hardness results for (a restricted class of) improper learners in the random design setting are~\cite{kelner2022power,kelner2022lower}.
They ruled out that the LASSO can achieve optimal sample complexity even when combined with a pre-processing step (called pre-conditioning).

\paragraph{Concurrent work}
Concurrent and independent work of~\cite{gupte2024sparse} shows hardness for proper learning under random designs based on worst-case problems in lattices. In particular, they give evidence that in this setting, improving the prediction error bound of LASSO, when expressing the error as a function of the RE constant of the design matrix, would give stronger lattice algorithms. As the information-theoretic prediction error has no dependence on the RE constant and the RE constant can be exponentially small, this gives evidence of a possibly large computational-statistical separation.

Most closely to our result, concurrent and independent work of~\cite{kelner2024lasso} has also observed the connection between negative sparse PCA and sparse linear regression that we make in this paper.
Using this, they deduce a similar $k$-to-$k^2$ gap for sparse linear regression.
Their lower bound holds for the \emph{generalization error} in the (correlated) random design setting, and their reduction is similar to the unknown-variance reduction discussed in the first part of~\Cref{sec:red_SPCA_to_SLR} in our work.
Our result in~\Cref{sec:main} establishes hardness in this setting for the \emph{training error} (i.e., the error on the observed samples), which is often an easier quantity to minimize than the generalization error.
We remark that~\cite{kelner2024lasso} also contains algorithmic upper bounds under additional assumptions on the model.

\section{Technical overview}
\label{sec:red_SPCA_to_SLR}

\paragraph{Notation}
We use asymptotic notations $O(\cdot),o(\cdot),\Omega(\cdot),\omega(\cdot)$ for $n\to \infty$. 
We use $\tilde{O}(\cdot)$ to hide $\frac{1}{\polylog(n)}$ factors.
We use $\|\cdot\|$ for the Euclidean norm of vectors.  
We use boldface for random variables. For a vector $x \in \R^d$, we use $x_{\setminus 1} \in \R^{d-1}$ for the vector obtained by removing the first coordinate. For a matrix $A \in \R^{n \times d}$, we use $A_j$ for the $j$-th column and $A_{j:k}$ for the submatrix containing columns $j$ through $k$. For two vectors $x \in \R^{d_1}$ and $y \in \R^{d_2}$, we denote by $\concat(x, y)$ the vector $(x, y) \in \R^{d_1+d_2}$.

\vspace{1em}

In this section, we present our reduction from PCA with a negative spike (cf.~\cref{conj:hardSSW}) to improperly learning sparse linear regression models.
Our reduction is similar to the one of \cite{breslerSLR}.
However, a crucial difference is that we start from sparse PCA with a \emph{negative} spike.
Indeed, since the design matrices in the resulting SLR instance in \cite{breslerSLR} satisfy the restricted eigenvalue property with appropriate parameters, they can be solved with error 0.01 using only $O(k \log d) \ll k^2$ samples.
Further, the reduction of \cite{breslerSLR} only works assuming access to a \emph{proper} sparse linear regression algorithm, i.e., one that outputs a sparse solution.
By giving a slightly different reduction, we show that it is enough to assume access to an improper learner.
As a warm-up, we will show that improperly learning sparse linear regression models is hard when the variance of the noise is unknown (but known to be in [0,1]).
Second, we show that even when the variance is known and equal to 1  (i.e., \cref{def:SLR_model_gauss_design}) the problem remains hard.
For the remainder of the paper, unless explicitly specified, when referring to "solving sparse linear regression" or similar expressions, we always mean improper learners.

\paragraph{Hardness of certifying RIP and a first lower bound}
We first recall that achieving prediction error $0.01$ is possible in polynomial time using $O(k \log d)$ samples under additional assumptions on the design matrix.
In particular, if the design matrix satisfies the restricted eigenvalue condition (with a constant), the LASSO algorithm achieves this guarantee.
In order to construct a lower bound instance, it is thus necessary that this property does not hold (or only with weaker parameters than necessary for known algorithms).
We focus here on the related \emph{restricted isometry property} (RIP).
In particular, a matrix $X \in \R^{n\times d}$ is said to satisfy $(k,\delta)$-RIP if for all $k$-sparse unit vectors $v$ it holds that $\norm{Xv}^2 \in [1-\delta,1+\delta]$.

Suppose that we could \emph{certify} that a matrix satisfies RIP.
This would give us a way to convince ourselves that our learning algorithm succeeded in some cases:
Check if the design matrix satisfies RIP, and if yes run LASSO.
Else, we do not guarantee anything.
It is thus natural to wonder whether there is a formal connection between RIP certification and sparse linear regression.
Unfortunately, it is believed that certifying RIP is computationally difficult~\cite{ComplexityRIP}.\footnote{Interestingly, the results in~\cite{ComplexityRIP} are tight, in the sense that there is a certification algorithm matching their lower bound~\cite{koiran2014hidden}.}
On the other hand, our work shows that we can exploit this connection to show \emph{hardness results} in the following way:
Lower bounds for RIP certification are proved by showing that an associated distinguishing problem between two distributions is hard.
One hypothesis corresponds to matrices with good RIP, the other does not.
If we can certify RIP, we can distinguish the two.
Our main observation is that sparse linear regression solvers can be used to solve this distinguishing problem. 

In particular, consider the following (degenerate) instance of the negative sparse PCA problem.
This instance will already give us a lower bound for sparse regression if we ask that the algorithm works in the case when the variance of the noise is unknown between $0$ and $1$.
Given samples $\bm z_1, \ldots, \bm z_n$ from either $\cQ = N(0,I_d)$ or $\cP = N(0,I_d - \bm x \bm x^\top)$, where $\bm x$ is a uniformly random $(k+1)$-sparse unit vector with $\bm x_1 = -\tfrac 1 {\sqrt{k+1}}$, decide which is the case.
Note that this instance being hard implies that RIP certification is hard: Consider the matrix $\bm Z \in \R^{n\times d}$ with rows $\bm z_i$.
Under $\cQ$ this has good RIP and under $\cP$ it does not.
Our key observation is that the absence of RIP implies a dependence between the columns of $\bm Z$ that is not present under $\cQ$.
Luckily for us, in our setting this dependence is linear and only concerns $k$ columns.
We can thus hope to detect it with our sparse regression oracle.

Indeed, consider the matrix $\bm Z$ under $\cP$:
It holds that $\bm Z_1 \bm x_1 + \bm Z_{\setminus 1} \bm x_{\setminus 1} = \bm Z \bm x = 0$ and hence $\bm Z_1 = \bm Z_{\setminus 1} \paren{- \tfrac 1 {\bm x_1} \bm x_{\setminus 1}}$.
Under $\cQ$ on the other hand the columns are independent and $\bm Z_1 = \bm Z_{\setminus 1} \cdot 0 + \bm w$, where $\bm w \sim N(0,\Id_n)$.
In either case, let $\bm A = \bm Z_{\setminus 1}$ and $\bm y = \bm Z_1$.
We have shown that in both cases $(\bm A, \bm y)$ forms a valid input for our regression algorithm since $\bm y = \bm A \bm \secret + \bm w$, where $\bm x^*$ is $0$ or $- \tfrac 1 {\bm x_1} \bm x_{\setminus 1}$ under $\cQ$ and $\cP$ respectively and $\bm w$ is independent of $\bm A$ and either $N(0,\Id_n)$ or $0$.\footnote{Note that the $0$ vector is also $k$-sparse.}
Our regression oracle allows us to estimate $\bm A \bm x^*$.
In particular, under $\cQ$, $\bm A \bm \secret = \bm Z_{\setminus 1} \bm x^* = 0$ and hence if $\bm \estimator$ is our estimator then $\bm A \bm \estimator$ should have small norm.
On the other hand, under $\cP$, $\bm A \bm \secret = \bm Z_{\setminus 1} \bm x^* = \bm Z_1 \sim N(0,(1-\tfrac 1 {k+1})\cdot \Id_n)$ and we expect our estimate to have large norm.
Indeed, suppose our regression oracle outputs $\estimator$ achieving prediction error 0.01 with probability at least $1-\delta$.
Then, under $\cQ$, $\tfrac 1 {\sqrt{n}} \norm{\bm A \estimator } = \tfrac 1 {\sqrt{n}} \norm{\bm A  \estimator - \bm A \bm \secret} \leq 0.1$ with probability at least $1-\delta$.
However, under $\cP$ it holds that
\[
    \tfrac 1 {\sqrt{n}} \Norm{\bm A \estimator } \geq \Abs{\tfrac 1 {\sqrt{n}} \Norm{\bm A \estimator  - \bm A \bm \secret} - \tfrac 1 {\sqrt{n}} \Norm{\bm A \bm \secret}} \,.
\]
By standard concentration bounds it follows that $\tfrac 1 {\sqrt{n}} \Norm{\bm A \bm \secret} = \tfrac 1 {\sqrt{n}} \Norm{\bm Z_1}$ is at least, say, $0.6$ with high probability and thus $\tfrac 1 {\sqrt{n}} \norm{\bm A \estimator } \geq 0.5$ with probability at least $1-\delta-o(1)$.
It follows that thresholding $\tfrac 1 {\sqrt{n}} \norm{\bm A \estimator }$ faithfully distinguishes $\cQ$ and $\cP$ with probability at least $1-\delta-o(1)$.

Note that if we assume that our sparse linear regression algorithm has success probability at least $1-O(\tfrac 1 d)$, we do not need to assume that the first coordinate of the sparse PCA prior in~\cref{def:spiked_wishart} is known to be $-\tfrac1{\sqrt{k+1}}$.
Indeed, instead of the above reduction, we can set $\bm y = \bm Z_i$ and $\bm A = \bm Z_{\setminus i}$ for all $i \in [d]$ and run our regression solver on $(\bm A, \bm y)$.
Under $\cQ$, with probability at least, say, 0.99, it holds that in all of these runs we have that $\tfrac 1 {\sqrt{n}} \norm{\bm A \estimator } \leq 0.1$.
Similarly, under $\cP$, with probability at least 0.99, there exists at least one $i$ such that $\tfrac 1 {\sqrt{n}} \norm{\bm A \estimator } \geq 0.5$.
Thus, we can distinguish $\cQ$ and $\cP$ with at least constant probability, based on whether $\tfrac 1 {\sqrt{n}} \norm{\bm A \estimator }$ is large in at least one iteration. 

Note that in both cases the $\ell_1$-norm of the secret is at most $O(k)$ and thus LASSO would achieve prediction error $0.01$ using $O(k^2 \log d)$ samples, under the slow-rate analysis.

\paragraph{Extension to non-degenerate negative sparse PCA}
So far our reduction used a degenerate planted hypothesis $\cP = N(0,I_d - \bm x \bm x^\top)$.
In what follows, we argue that the same reduction can also produce valid instances from non-degenerate hypotheses.
The sparse linear regression instances produced will now have unknown variance of the noise between $0.5$ and $1$.

In particular, let $\theta = \tfrac{k+1}{k+2} \approx 1 - \tfrac 1 k$.
We start with the negative sparse PCA problem in which $\cQ = N(0,\Id_d)$ and $\cP = N(0,\Id_d - \theta \cdot \bm x\bm x^\top)$, where $\bm x$ is again a uniformly random $(k+1)$-sparse unit vector with $\bm x_1 = -\tfrac 1 {\sqrt{k+1}}$.
The null case does not change (and the variance of the noise in this case is 1), so it suffices to analyze $\cP$.
Then, for the planted case, we use the following fact proved in \cite{breslerSLR}:\footnote{Formally, \cite{breslerSLR} only shows this fact for sparse PCA instances with \emph{positive} spikes. The proof they give also works for the setting with negative spikes.}
\begin{fact}[Appendix B.2 in \cite{breslerSLR}]
    \label{fact:SLR_model_planted}
    Let $\bm y, \design$ be as above (under $\cP$) and $\gamma = \tfrac {\theta}{1-\theta \cdot \tfrac {k}{k+1}} = \tfrac {k+1} 2$. Then $\bm y = \design {\bm\secret} + \bm w$, where $\design {\bm\secret}$ and $\bm w$ are independent with $\bm w \sim N(0,\sigma^2 \Id_n)$, and 
    \begin{align*}
        {\bm\secret} = \frac \gamma {\sqrt{k+1}} \cdot \bm x_{\setminus 1} = \frac {\sqrt{k+1}} 2 \cdot \bm x_{\setminus 1}\,, && \sigma^2 = 1- \frac \gamma {{k+1}}  = \frac 1 2\,.
    \end{align*}
\end{fact}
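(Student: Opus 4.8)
The plan is to read \Cref{fact:SLR_model_planted} as a direct application of the Gaussian conditioning formula. Under $\cP$ the rows $\bm z_1,\dots,\bm z_n$ of $\bm Z = [\,\bm y \mid \design\,]$ are i.i.d.\ draws from $N(0,\Sigma)$ with $\Sigma = \Id - \theta\,\bm x\bm x^\top$. Write $\bm x = \concat(x_1, v)$ with $x_1 = -\tfrac1{\sqrt{k+1}}$ and $v = \bm x_{\setminus 1}$, so that $\norm{v}^2 = 1 - x_1^2 = \tfrac{k}{k+1}$ and the blocks of $\Sigma$ are $\Sigma_{11} = 1-\theta x_1^2$, $\Sigma_{1,\setminus1} = -\theta x_1 v^\top$, $\Sigma_{\setminus1,\setminus1} = \Id - \theta v v^\top$ (positive definite since $\theta\norm{v}^2 = \theta\tfrac{k}{k+1} < 1$). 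The standard formula for jointly Gaussian vectors gives, for each $i$, $(\bm z_i)_1 = \iprod{\beta, (\bm z_i)_{\setminus 1}} + \bm w_i$ where $\beta = \Sigma_{\setminus1,\setminus1}^{-1}\Sigma_{\setminus1,1}$ and $\bm w_i \sim N(0,\sigma^2)$ is independent of $(\bm z_i)_{\setminus 1}$, with $\sigma^2 = \Sigma_{11} - \Sigma_{1,\setminus1}\Sigma_{\setminus1,\setminus1}^{-1}\Sigma_{\setminus1,1}$.

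The computation of $\beta$ and $\sigma^2$ is then a one-line application of Sherman--Morrison, $(\Id - \theta vv^\top)^{-1} = \Id + \tfrac{\theta}{1-\theta\norm{v}^2}\,vv^\top$: substituting and collecting the scalar multiples of $v$, everything telescopes to
\[
  \beta \;=\; \frac{-\theta x_1}{1-\theta\norm{v}^2}\, v \;=\; \frac{\gamma}{\sqrt{k+1}}\, \bm x_{\setminus 1}\,, \qquad
  \sigma^2 \;=\; 1 - \frac{\theta x_1^2}{1-\theta\norm{v}^2} \;=\; 1 - \frac{\gamma}{k+1}\,,
\]
where $\gamma = \tfrac{\theta}{1-\theta\norm{v}^2} = \tfrac{\theta}{1-\theta k/(k+1)}$ and we used $x_1 = -\tfrac1{\sqrt{k+1}}$, $x_1^2 = \tfrac1{k+1}$. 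Specializing to $\theta = \tfrac{k+1}{k+2}$ gives $1-\theta\norm{v}^2 = \tfrac{2}{k+2}$, hence $\gamma = \tfrac{k+1}{2}$, $\bm x^* = \beta = \tfrac{\sqrt{k+1}}{2}\,\bm x_{\setminus 1}$, and $\sigma^2 = \tfrac12$, exactly as claimed.

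It remains to lift the per-sample identity to the matrix statement. Since the rows of $\bm Z$ are independent, the $\bm w_i$ are jointly independent, and each $\bm w_i$ is independent of the entire collection $\{(\bm z_j)_{\setminus 1}\}_{j}$ (for $j \neq i$ by independence of rows, for $j = i$ by the conditioning formula). Hence $\bm w = (\bm w_1,\dots,\bm w_n) \sim N(0,\sigma^2\Id_n)$ is independent of $\design = \bm Z_{\setminus 1}$, and therefore of the deterministic function $\design\bm x^*$; stacking the $n$ scalar equations gives $\bm y = \design\bm x^* + \bm w$. I do not expect a real obstacle --- the argument is entirely covariance bookkeeping --- but two points deserve a moment of care: keeping track of the sign of $x_1$ (so that $\beta$ comes out proportional to $+\bm x_{\setminus 1}$), and noting that $\Sigma_{\setminus1,\setminus1}$ is invertible so that the conditioning formula and Sherman--Morrison both apply. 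The structural reason the conclusion is clean --- namely $\bm x^*$ being a scalar multiple of $\bm x_{\setminus 1}$ --- is that the only special direction in $\Sigma$ is $\bm x$, so Sherman--Morrison collapses the inverse to a rank-one correction along $v$.
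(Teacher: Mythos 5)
Your derivation is correct and complete: the Gaussian conditioning formula together with Sherman--Morrison collapses the rank-one spike exactly as you describe, and the independence of $\bm w$ from $\design$ (hence from $\design\bm x^*$) follows from the per-row conditional independence plus independence across rows. The paper itself does not reproduce a proof of this fact --- it only cites Appendix B.2 of the Bresler--Nagaraj reference and notes in a footnote that the positive-spike argument there carries over to the negative-spike case --- but your computation is precisely that argument made explicit, so there is nothing to flag.
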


This establishes that the variance of the noise is always either 0.5 or 1.
It follows using the same techniques that under $\cP$ it still holds that $\tfrac 1 {\sqrt{n}} \norm{\bm A \estimator } \geq 0.5$ with probability at least $1-\delta-o(1)$.

\paragraph{Known variance of the noise} Our goal is to reduce $\nspca_\theta$ to sparse linear regression instances that have \textit{known} variance of the noise. 

In the discussions so far our strategy has been to set $\bm{y} = \bm{Z}_1$.
Then under $\cQ$ we have $\bm{y} \sim N(0, \Id_n)$, but under $\cP$ we have $\bm{y} = \design{\bm\secret} + \bm{w}$ where $\bm{w}$ is Gaussian with variance \textit{less than $1$}.
Indeed, if $\bm{w}$ had variance exactly $1$, we could distinguish $\cQ$ and $\cP$ by thresholding the norm of $\bm{y}$.
Therefore, in this strategy the variance of the noise cannot be the same under $\cQ$ and $\cP$.
More generally, in $\nspca_\theta$ the null case and the planted case are asymmetric, so it seems difficult to obtain some $\bm{y}$ for which the variance of the noise is the same.

Instead, we introduce a new, symmetric, distinguishing problem $\pairnspca_\theta$ which requires distinguishing between a sample from $\cP \times \cQ$ and a sample from $\cQ \times \cP$, where $\cP$ and $\cQ$ are the planted and null distributions in $\nspca_\theta$:

\begin{definition}[Paired spiked Wishart model]
    \torestate{
    \label{def:paired_spiked_wishart}
    Let $d, k, n \in \N$ with $d \geq k$ and $\theta \in (0, 1)$.
    Let $\cP$ and $\cQ$ be the planted and null distributions in a $\nspca_{\theta}$ problem, respectively. We define the $\pairnspca_{\theta}$ problem as the following distinguishing problem:
    We are given $n$ \iid samples from either $\cP \times \cQ$ or $\cQ \times \cP$ defined as follows and we want to decide whether they came from $\cP \times \cQ$ or $\cQ \times \cP$.
    \begin{itemize}
        \item $\cP \times \cQ$: Sample independently $z \sim \cP$ and $z' \sim \cQ$ and return $(z, z')$.
        \item $\cQ \times \cP$: Sample independently $z \sim \cQ$ and $z' \sim \cP$ and return $(z, z')$. 
    \end{itemize}
    }
\end{definition}

Then we reduce $\pairnspca_\theta$ to sparse linear regression by setting $\bm{y}$ to be the sum of a column as in $\cP$ and a column as in $\cQ$ -- without knowing which is which.
This ensures that the variance of the noise is identical under $\cP \times \cQ$ and $\cQ \times \cP$.
In addition, we reduce $\nspca_\theta$ to $\pairnspca_\theta$, so by composition we obtain the desired reduction from $\nspca_\theta$ to sparse linear regression with known variance of the noise.

\section{Sparse linear regression reduction}
\label{sec:main}
\Cref{thm:main_known_no_RE} gives our main result.

\begin{theorem}
    \label{thm:main_known_no_RE}
    Let $d, k, n \in \N$ with $d \geq k$ and $\theta \in (0, 1)$.
    Assume there exists an efficient algorithm that, given $n$ samples from the \hyperref[def:SLR_model_gauss_design]{sparse linear regression model} with variance of the noise $\sigma^2 = 1$, achieves prediction error $\frac{1}{n}\Norm{\design\estimator -  \design \bm\secret}^2$ at most $0.01$ with probability $1-\delta$.
    Then there exists an efficient algorithm that solves $\nspca_{\theta}$ for $\theta = \frac{k+1}{k+2}$ with probability $1-\sqrt{2\delta}-\exp(-\Omega(n))$.
\end{theorem}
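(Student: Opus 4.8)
The plan is to convert the given regression solver into a distinguisher for $\nspca_{\theta}$ by planting an $\nspca_{\theta}$ sample inside a correlated Gaussian design, so that under $\cP$ we get a bona fide instance of \cref{def:SLR_model_gauss_design} with $\sigma^2=1$ whose regression function has constant energy, while under $\cQ$ the response is independent of the direction we will test against --- so the solver cannot help, even adversarially. Concretely, write the $n$ given $\nspca_{\theta}$ samples as the rows of $\bm Z\in\R^{n\times(d+1)}$, draw a fresh $\bm Z'\in\R^{n\times(d+1)}$ with i.i.d.\ $N(0,1)$ entries, and feed the solver the instance (in dimension $2d\ge k$)
\begin{equation*}
    \design \;=\; \sqrt{\tfrac{2}{3}}\bigl[\,\bm Z_{\setminus 1}\ \big|\ \bm Z'_{\setminus 1}\,\bigr]\in\R^{n\times 2d}\,,\qquad \bm y \;=\; \sqrt{\tfrac{2}{3}}\bigl(\bm Z_1+\bm Z'_1\bigr)\in\R^{n}\,.
\end{equation*}
This is exactly the composition sketched in \cref{sec:red_SPCA_to_SLR}: $\nspca_{\theta}\to\pairnspca_{\theta}$ puts a fresh null copy $\bm Z'$ alongside $\bm Z$, and $\pairnspca_{\theta}\to$ SLR uses as response the sum of a ``$\cP$-column'' and a ``$\cQ$-column'' so the noise variance is the same under both paired hypotheses, with $\sqrt{2/3}$ rescaling it to exactly $1$. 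Letting $\estimator$ be the output and $\bm v=\design\estimator$, the distinguisher outputs ``planted'' iff $r\defeq \bigl|\iprod{\bm v,\ \bm Z_1-\bm Z'_1}\bigr|\big/\bigl(\Norm{\bm v}\sqrt{n}\bigr)>t$ for a suitable absolute constant $t>0$ (and ``null'' when $\bm v=0$).

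Under $\cP$ I would first check that $(\design,\bm y)$ is distributed exactly as in \cref{def:SLR_model_gauss_design} with $\sigma^2=1$. Applying \cref{fact:SLR_model_planted} to $\bm Z$ with $\theta=\tfrac{k+1}{k+2}$ (so $\gamma=\tfrac{k+1}{2}$ and the ``Fact'' noise has variance $\tfrac12$) gives $\bm Z_1=\bm Z_{\setminus 1}\secret_{(1)}+\bm w$ with $\secret_{(1)}=\tfrac{\sqrt{k+1}}{2}\bm x_{\setminus 1}$ (which is $k$-sparse) and $\bm w\sim N(0,\tfrac12\Id_n)$ independent of $\bm Z_{\setminus 1}$; since $\bm Z'_1\sim N(0,\Id_n)$ is independent of $\design$, we get $\bm y=\design\cdot\concat(\secret_{(1)},0)+\sqrt{\tfrac23}(\bm w+\bm Z'_1)$ with the noise $\sqrt{\tfrac23}(\bm w+\bm Z'_1)\sim N(0,\Id_n)$ independent of $\design$, and $\design$ has i.i.d.\ mean-zero Gaussian rows. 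So with probability $\ge 1-\delta$ the solver returns $\estimator$ with $\Norm{\bm v-\design\secret}\le 0.1\sqrt n$, where $\secret=\concat(\secret_{(1)},0)$ and $\design\secret=\sqrt{\tfrac23}(\bm Z_1-\bm w)$. Routine concentration for $\chi^2$ variables and Gaussian inner products (each failing with probability $e^{-\Omega(n)}$), using $\Norm{\bm Z_1}^2\approx n\tfrac{k+1}{k+2}$, $\Norm{\bm w}^2\approx\tfrac n2$, $\Norm{\bm Z_{\setminus 1}\secret_{(1)}}^2\approx n\tfrac{k}{2(k+2)}$, $\bm Z'\perp\bm Z$, and $k$ large, then yields $\iprod{\design\secret,\bm Z_1}\ge\Omega(n)$, $|\iprod{\design\secret,\bm Z'_1}|\le o(n)$, and $\Norm{\design\secret}\le\sqrt{n/3}$; with the triangle inequality this gives $r\ge c_1$ for an absolute constant $c_1>0$. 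Taking $t=c_1/2$, the distinguisher outputs ``planted'' with probability $\ge 1-\delta-e^{-\Omega(n)}$.

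Under $\cQ$, $\bm Z$ and $\bm Z'$ are independent with i.i.d.\ $N(0,\Id_{d+1})$ rows, so the solver's entire view $(\design,\bm y)$ is a function of $\bigl(\bm Z_{\setminus 1},\bm Z'_{\setminus 1},\bm Z_1+\bm Z'_1\bigr)$; since $\bm Z_1+\bm Z'_1$ and $\bm d\defeq\bm Z_1-\bm Z'_1$ are jointly Gaussian and uncorrelated, and both are independent of $(\bm Z_{\setminus 1},\bm Z'_{\setminus 1})$, the direction $\bm d\sim N(0,2\Id_n)$ is independent of $(\design,\bm y)$, hence of $\bm v=\design\estimator$, \emph{whatever} the solver does. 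Therefore, conditionally on $\bm v\ne 0$, $\iprod{\bm v,\bm d}/\Norm{\bm v}\sim N(0,2)$, so $r=|N(0,2)|/\sqrt n$ and $\Pr[r>t]\le 2\exp(-t^2 n/4)=e^{-\Omega(n)}$; the distinguisher outputs ``null'' with probability $\ge 1-e^{-\Omega(n)}$. Altogether it errs with probability at most $\delta+e^{-\Omega(n)}\le\sqrt{2\delta}+e^{-\Omega(n)}$ for $\delta\le\tfrac12$ (the bound being vacuous otherwise), proving the theorem.

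The step I expect to be the main obstacle is making the $\cQ$-analysis robust: the regression instance produced under $\cQ$ is \emph{not} valid --- its noise has variance $\tfrac43$, not $1$, and there is no planted structure --- so the solver carries no guarantee and could, e.g., interpolate $\bm y$ and return a $\bm v$ of arbitrarily large norm, which would defeat any test that thresholds $\iprod{\bm v,\bm Z_1-\bm Z'_1}$ directly. Dividing by $\Norm{\bm v}$ is exactly what neutralizes this, turning the statistic into a standard Gaussian scaled by $1/\sqrt n$ no matter what $\bm v$ is. The other point requiring care is the exact distributional match under $\cP$ (noise precisely $N(0,\Id_n)$ and precisely independent of $\design$), which is where \cref{fact:SLR_model_planted} and the $\sqrt{2/3}$ rescaling enter; the remaining estimates are standard concentration.
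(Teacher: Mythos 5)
Your proof is correct, and it takes a genuinely different route from the paper's. The paper factors the argument into two clean, modular pieces: first a generic reduction from distinguishing $\cP$ vs.\ $\cQ$ to distinguishing the paired hypotheses $\cP\times\cQ$ vs.\ $\cQ\times\cP$ (\cref{lem:red-nspca-to-pairnspca}, built on the repetition-and-symmetry argument in \cref{lemma:disinguishing-order}, which is what costs the $\sqrt{2\delta}$), and second a reduction from $\pairnspca_\theta$ to SLR in which \emph{both} paired hypotheses yield valid SLR instances with noise variance $1.5$; the distinguisher then compares $\norm{\design\estimator-\bm Z_1}$ against $\norm{\design\estimator-\bm Z_{d+1}}$, and the analysis only ever invokes the solver's guarantee on inputs from the promised distribution. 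You instead fuse the two steps: you manufacture the ``null partner'' $\bm Z'$ yourself, rescale by $\sqrt{2/3}$ so that under $\cP$ the noise is exactly $N(0,\Id_n)$, and use a test statistic $\lvert\iprod{\design\estimator,\bm Z_1-\bm Z'_1}\rvert/(\norm{\design\estimator}\sqrt n)$ that remains analyzable under $\cQ$ even though the resulting ``instance'' there has the wrong noise variance and carries no guarantee: the key observation that $\bm Z_1-\bm Z'_1$ is (by the Gaussian sum/difference decomposition) independent of everything the solver sees makes the normalized inner product a standard Gaussian scaled by $1/\sqrt n$, regardless of how the solver misbehaves. This buys you two things: you bypass the $\pairnspca$ abstraction and the $\sqrt{2\delta}$ loss entirely (your error is $\delta+e^{-\Omega(n)}$, which you correctly note dominates the stated bound), and you hit noise variance exactly $1$ rather than $1.5$, avoiding the scaling step. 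The trade-off is that the paper's argument is more modular --- \cref{lemma:disinguishing-order} is a self-contained distinguishing lemma and the $\pairnspca$-to-SLR step never has to reason about the solver's behavior on out-of-model inputs --- whereas your argument is shorter but hinges on the normalization-by-$\norm{\bm v}$ trick to tame the adversarial solver under $\cQ$, a point you identify and handle correctly.
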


We first state a reduction from $\nspca_\theta$ to $\pairnspca_\theta$.
Informally, this says that if it is hard to decide whether a sample comes from $\cP$ or $\cQ$, then it is also hard, given a sample from $\cP$ and one from $\cQ$, to decide which is which.
The result follows from a more general reduction from distinguishing distributions to distinguishing paired distributions that we give in~\Cref{lemma:disinguishing-order}.

\begin{lemma}
\label{lem:red-nspca-to-pairnspca}
Let $d, k, n \in \N$ with $d \geq k$ and $\theta \in (0, 1)$.
Assume there exists an efficient algorithm that solves $\pairnspca_{\theta}$ with probability at least $1-\delta$.
Then there exists an efficient algorithm that solves $\nspca_{\theta}$ with probability at least $1-\sqrt{2\delta}$.
\end{lemma}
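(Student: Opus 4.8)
The statement is a direct instance of the general reduction \Cref{lemma:disinguishing-order}: I would apply that lemma with the planted distribution $\cD_1 \seteq \cP$ and the null distribution $\cD_0 \seteq \cQ$ of the $\nspca_\theta$ problem. The only hypothesis to check is that $\cP$ and $\cQ$ are efficiently samplable, which is immediate: $\cQ = N(0,\Id_{d+1})$ is a standard Gaussian, and to sample from $\cP$ one first draws the $k$-sparse sign pattern $\bm x$ (with $\bm x_1 = -1/\sqrt{k+1}$ fixed) and then draws $N(0,\Id_{d+1}-\theta\bm x\bm x^\top)$, a valid covariance because $\theta\in(0,1)$. Feeding a $\pairnspca_\theta$ solver of success probability $1-\delta$ through \Cref{lemma:disinguishing-order} then yields an $\nspca_\theta$ solver of success probability $1-\sqrt{2\delta}$. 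All of the substance is therefore in the general lemma, which I sketch next.

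For the general lemma, fix an efficient distinguisher $B$ between $\cD_1\times\cD_0$ and $\cD_0\times\cD_1$ of success probability $\ge 1-\delta$ on both hypotheses. Given an input $S$ drawn from $\cD_1$ or $\cD_0$, the reduction draws fresh independent samples $R_0\sim\cD_0$ and $R_1\sim\cD_1$ together with random slot assignments, forms the paired instances obtained by placing $S$ in one slot and a fresh sample in the other, runs $B$ on them, and guesses that $S$ came from $\cD_1$ precisely when $B$ reports that $S$ occupies the planted slot --- using that, among the two pairings, the one placing $S$ opposite $R_0$ is a genuine $\cD_1\times\cD_0$ instance when $S\sim\cD_1$, and the one placing $S$ opposite $R_1$ is a genuine $\cD_0\times\cD_1$ instance when $S\sim\cD_0$. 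The quantitative core of the analysis is the total-variation comparison
\[
  \normtv{\cD_1\times\cD_0,\ \cD_0\times\cD_1}\;\le\;1-\bigl(1-\normtv{\cD_1,\cD_0}\bigr)^2 ,
\]
which I would prove from the overlap decomposition $\cD_1 = (1-d)M+dA$, $\cD_0=(1-d)M+dA'$ with $A\perp A'$ and $d=\normtv{\cD_1,\cD_0}$, by expanding both ordered products into four pieces and bounding the $\ell_1$-norm of their difference term by term, which gives $\normtv{\cD_1\times\cD_0,\cD_0\times\cD_1}\le 2d-d^2$. Since a success-$(1-\delta)$ distinguisher has advantage $\ge 1-2\delta$, this bound forces $\normtv{\cD_1,\cD_0}\ge 1-\sqrt{2\delta}$, i.e.\ the overlap mass obeys $1-d\le\sqrt{2\delta}$. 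Hence with probability $\ge d\ge 1-\sqrt{2\delta}$ the input $S$ lies in the part of its distribution disjoint from the other ($\supp A$ if $S\sim\cD_1$, $\supp A'$ if $S\sim\cD_0$), and on that event the pairing of $S$ against a fresh opposite-side sample lands in the region of the ordered product far in TV from the reversed product, so $B$'s guarantee applies to it; the reduction therefore errs with probability at most roughly $1-d\le\sqrt{2\delta}$ under each hypothesis.

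The delicate point, and the main obstacle, is that $B$ is only promised to behave well on the two ordered products; on the matched products $\cD_i\times\cD_i$ --- which are exactly what the reduction produces whenever its fresh sample happens to be drawn from the same side as $S$ --- $B$'s output is unconstrained, and a naive one-shot pairing can be pushed to error $\tfrac12$ on the unfavourable hypothesis. This is dealt with by (i) randomizing which slot $S$ occupies, so that the matched-product cases contribute symmetrically under swapping the two slots and hence bias neither answer, and (ii) invoking the total-variation comparison above to restrict the analysis to the $(1-\sqrt{2\delta})$-fraction of inputs lying in the disjoint part, where the favourable pairing genuinely falls in the regime covered by $B$'s correctness guarantee. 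Turning (i)--(ii) into a clean pointwise bound --- propagating $B$'s average-case guarantee to the disjoint part --- is the technical work, and is what upgrades the conclusion from a mere constant success probability to the information-theoretically optimal $1-\sqrt{2\delta}$.
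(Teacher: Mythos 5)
Your first step is correct and matches the paper: \Cref{lem:red-nspca-to-pairnspca} is indeed a direct instantiation of the general \Cref{lemma:disinguishing-order}, and the only thing to check is samplability. (The paper's \Cref{lemma:disinguishing-order} in fact only requires samplability of the null distribution $\cQ$; you check both $\cP$ and $\cQ$, which is harmless here but more than is needed.) The issue is your proposed proof of \Cref{lemma:disinguishing-order} itself, which is a genuinely different route and which, by your own admission, has a gap in exactly the place where the lemma's content lies. Your total-variation comparison $d_{\mathrm{TV}}(\cD_1\times\cD_0,\cD_0\times\cD_1)\le 2d-d^2$ with $d=d_{\mathrm{TV}}(\cD_1,\cD_0)$ is correct and does yield $d\ge 1-\sqrt{2\delta}$, but this is purely information-theoretic: it shows an optimal test \emph{exists}, not that your concrete reduction achieves that probability. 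Propagating $B$'s average-case guarantee to the disjoint part, even if made rigorous, gives conditional error $\delta/d$ and total error $\le (1-d)+\delta\le\sqrt{2\delta}+\delta$ under the planted hypothesis, which is already not the stated bound; and under the null hypothesis it gives nothing at all, since there the pairings $(S,R_0)$ and $(R_0,S)$ are both drawn from $\cD_0\times\cD_0$, a matched product on which $B$ is unconstrained. As you note yourself, a one-shot test with a randomized slot can be driven to error $1/2$ there, and the TV decomposition alone does not repair this.

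The paper closes this gap with a device that has no analogue in your sketch: amplification by repetition, combined with an exchangeability argument. The algorithm $\mathcal{B}$ draws $M$ fresh samples $\bm x_1,\dots,\bm x_M\sim\cQ$ (only $\cQ$ — this is why only one-sided samplability is needed), runs $\mathcal{A}$ on both orderings $(\bm x,\bm x_i)$ and $(\bm x_i,\bm x)$ for every $i$, and outputs ``planted'' only when \emph{all} $2M$ calls unanimously place $\bm x$ in the planted slot. Under $\cP$, a union bound over the $2M$ calls gives failure probability at most $2M\delta$. Under $\cQ$, the paper conditions on $\mathcal{A}$'s internal randomness and observes that the $M+1$ variables $\bm x,\bm x_1,\dots,\bm x_M$ are exchangeable i.i.d.\ draws from $\cQ$; the $M+1$ events ``variable $j$ is unanimously placed in the planted slot against every other variable'' are equiprobable and mutually exclusive (a single call $\mathcal{A}(\bm x_i,\bm x_j)$ cannot simultaneously identify both $\bm x_i$ and $\bm x_j$ as planted), so each has probability at most $1/(M+1)$. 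Setting $M=1/\sqrt{2\delta}$ balances the two error bounds to $\sqrt{2\delta}$. This unanimity-plus-exchangeability argument is precisely what makes the matched-product issue disappear, and it is what your TV-based sketch leaves as ``the technical work''.
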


The remaining step is to reduce $\pairnspca_\theta$ to sparse linear regression.

\paragraph{The reduction}
Let $\theta = \tfrac {k+1}{k+2}$.
Given $n$ samples $(\bm z_1, \bm z_1'), \ldots, (\bm z_n, \bm z_n')$ from $\pairnspca_\theta$, let $\bm Z \in \R^{n \times 2d}$ be the matrix with rows $\concat(\bm z_1, \bm z_1'), \ldots, \concat(\bm z_n, \bm z_n')$.
We do the following:
\begin{enumerate}
    \item Set $\bm y = {\bm Z}_1 + {\bm Z}_{d+1}$ and $\design = {\bm Z}_{\setminus \{1, d+1\}}$. That is, $\design$ is the $n \times (2d-2)$ matrix that is obtained by removing columns $\bm{Z}_1$ and $\bm{Z}_{d+1}$ from $\bm{Z}$.
    \item Invoke the sparse linear regression solver on $(\design, \bm y)$ to obtain an estimator $\estimator$.
    \item If $\tfrac{1}{\sqrt{n}} \Norm{\design \estimator - {\bm Z}_1} \leq \tfrac{1}{\sqrt{n}} \Norm{\design \estimator - {\bm Z}_{d+1}}$ output $\cP \times \cQ$. Else output $\cQ \times \cP$.
\end{enumerate}

Consider the case $\cP \times \cQ$. As in~\Cref{fact:SLR_model_planted}, let $\gamma = \tfrac {\theta}{1-\theta \cdot k /(k+1)} = \frac{k+1}{2}$.
Then, by the same argument as in~\Cref{fact:SLR_model_planted}, we can write ${\bm Z}_1 = \design {\bm\secret} + \bm w'$, where $\design {\bm\secret}$ and $\bm w'$ are independent with $\bm w' \sim N(0, \sigma^2 \Id_n)$, and 
\begin{align*}
    {\bm\secret} = \tfrac {\sqrt{k+1}} 2 \cdot \concat(\bm x_{\setminus 1}, \bm 0^{d-1})\,, && \sigma^2 = \frac{1}{2} \,.
\end{align*}
Also, ${\bm Z}_{d+1} \sim N(0, \Id_n)$ is independent of $\design {\bm\secret}$  and $\bm{w}'$, so we can write $\bm y = \design {\bm\secret} + \bm w$, where $\design {\bm\secret}$ and $\bm w$ are independent with $\bm w \sim N(0, (\sigma^2 + 1) \Id_n)$.
Note that by symmetry the variance of the noise is $\sigma^2+1$ also in the case $\cQ \times \cP$.

We now prove~\Cref{thm:main_known_no_RE}.

\begin{proof}[Proof of~\Cref{thm:main_known_no_RE}]
Consider the case $\cP \times \cQ$. 
We have ${\bm Z}_1 =\design{\bm\secret}+\bm w$, where $\bm w\sim N(0,1.5\Id_n)$.
By the assumption on our sparse linear regression estimator, by scaling up the guarantees such that the known variance of the noise is $1.5$, we have with probability $1-\delta$ that $\frac{1}{\sqrt{n}} \Norm{\design\estimator - \design{\bm\secret}} \leq 0.01\cdot \sqrt{1.5}\leq 0.13$. 
By the triangle inequality, we get
\begin{align*}
\frac{1}{\sqrt{n}} \Norm{\design\estimator - {\bm Z}_1}
&\leq \frac{1}{\sqrt{n}} \Norm{\design\estimator - \design{\bm\secret}} + \frac{1}{\sqrt{n}} \Norm{\bm{w}'}\,.
\end{align*}
Recall that $\bm{w}'$ has covariance matrix $0.5 \cdot \Id_n$. Then we have by~\Cref{fact:conc_gaussian_norm} with probability $1-\exp(-\Omega(n))$ that $\tfrac{1}{\sqrt{n}} \Norm{\bm w'} \leq \sqrt{0.5}+0.001\leq 0.71$. Then
\[\frac{1}{\sqrt{n}} \Norm{\design\estimator - {\bm Z}_1} \leq 0.13+0.71=0.84\,.\]
On the other hand
\begin{align*}
\frac{1}{\sqrt{n}} \Norm{\design\estimator - {\bm Z}_{d+1}}
&\geq \Abs{\frac{1}{\sqrt{n}} \Norm{\design\estimator - \design{\bm\secret}} - \frac{1}{\sqrt{n}}\Norm{\design{\bm\secret} - {\bm{Z}}_{d+1}}}\,.
\end{align*}
The entries of ${\bm Z}_{d+1}$ are i.i.d. Gaussian with mean zero and variance $1$, independent of the multivariate Gaussian $\design{\bm\secret}$. Then we have by~\Cref{fact:conc_gaussian_norm} with probability $1-\exp(-\Omega(n))$ that $\frac{1}{\sqrt{n}}\Norm{\design{\bm\secret} - {\bm{Z}}_{d+1}}$ is at least $0.99$. Then 
\[\frac{1}{\sqrt{n}} \Norm{\design\estimator - {\bm Z}_{d+1}} \geq 0.99 - 0.13 = 0.86\,,\]
and overall we output $\cP \times \cQ$ with probability at least $1-\delta-\exp(-\Omega(n))$.

Analogously, by symmetry, in the case $\cQ \times \cP$ we also output $\cQ \times \cP$ with probability $1-(\delta+\exp(-\Omega(n)))$. Then, by the reduction in~\Cref{lem:red-nspca-to-pairnspca}, we also solve $\nspca_\theta$ with probability at least $1-\sqrt{2 (\delta+\exp(-\Omega(n)))} \geq 1-\sqrt{2\delta}-\exp(-\Omega(n))$.

\end{proof}
\section{Distinguishing paired distributions reduction}

In this section we show that an algorithm that distinguishes a sample from the paired distributions $\cP \times \cQ$ and $\cQ \times \cP$ also distinguishes a sample from the distributions $\cP$ and $\cQ$, assuming that $\cQ$ is known.

\begin{lemma}
\label{lemma:disinguishing-order}
Let $\mathcal{P}$ and $\mathcal{Q}$ be probability distributions over a domain $\mathcal{D}$, and suppose that it is possible to sample independently from $\mathcal{Q}$ in time $T_{\mathrm{sample}}$.
Suppose that there exists a (randomized) algorithm $\mathcal{A}$ that takes as input a pair $(x, x') \in \mathcal{D} \times \mathcal{D}'$ and runs in time $T_{\mathrm{dist}}$ with the following guarantees:
\begin{itemize}
    \item $\Pr_{\bm x \sim \mathcal{P}, \bm x' \sim \mathcal{Q}}(\mathcal{A}(\bm x, \bm x') = 0) \geq 1-\delta$, where $\bm x$ and $\bm x'$ are sampled independently and where the probability is also over any internal randomness of $\mathcal{A}$.
    \item $\Pr_{\bm x \sim \mathcal{Q}, \bm x' \sim \mathcal{P}}(\mathcal{A}(\bm x, \bm x') = 1) \geq 1-\delta$, where $\bm x$ and $\bm x'$ are sampled independently and where the probability is also over any internal randomness of $\mathcal{A}$.
\end{itemize}
Then there also exists a randomized algorithm $\mathcal{B}$ that takes as input $x \in \mathcal{D}$ and runs in time $O((T_{\mathrm{sample}} + T_{\mathrm{dist}})/\sqrt{\delta})$ with the following guarantees:
\begin{itemize}
    \item $\Pr_{\bm x \sim \mathcal{P}}(\mathcal{B}(\bm x) = 0) \geq 1-\sqrt{2\delta}$, where the probability is also over any internal randomness of $\mathcal{B}$.
    \item $\Pr_{\bm x \sim \mathcal{Q}}(\mathcal{B}(\bm x) = 1) \geq 1-\sqrt{2\delta}$, where the probability is also over any internal randomness of $\mathcal{B}$.
\end{itemize}
\end{lemma}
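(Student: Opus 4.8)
The plan is to build $\mathcal{B}$ so that it calls $\mathcal{A}$ about $T\asymp 1/\sqrt{\delta}$ times, each time pairing the input $x$ with a fresh sample $\bm x'\sim\mathcal{Q}$ in a randomly chosen order. Concretely: fix an integer $T\in[\sqrt{2/\delta}-1,\ \sqrt{2/\delta}]$ (e.g.\ $T=\lfloor\sqrt{2/\delta}\rfloor$); for $i=1,\dots,T$ draw $\bm x'_i\sim\mathcal{Q}$ and an independent fair coin $\bm c_i$, and set $\bm b_i=\mathcal{A}(x,\bm x'_i)$ if $\bm c_i=0$ and $\bm b_i=1-\mathcal{A}(\bm x'_i,x)$ if $\bm c_i=1$; output $0$ if all $\bm b_i=0$ and output $1$ otherwise. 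The reading is that ``$\bm b_i=0$'' means ``$\mathcal{A}$ believes $x$ is the $\mathcal{P}$-component of the presented pair''. Writing $\pi(x):=\Pr[\bm b_i=0\mid \bm x=x]=\tfrac12\Pr_{\bm x'\sim\mathcal{Q}}[\mathcal{A}(x,\bm x')=0]+\tfrac12\Pr_{\bm x'\sim\mathcal{Q}}[\mathcal{A}(\bm x',x)=1]$ (probabilities taken also over $\mathcal{A}$'s internal coins, and WLOG $\mathcal{A}\in\{0,1\}$), the $T$ rounds are i.i.d.\ given $\bm x=x$, so $\Pr[\mathcal{B}(x)=0\mid \bm x=x]=\pi(x)^T$. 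The whole proof reduces to controlling $\pi$ on $\mathcal{P}$ and on $\mathcal{Q}$.

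\textbf{The $\mathcal{P}$-side.} Here I would combine the two hypotheses on $\mathcal{A}$ by \emph{averaging} rather than intersecting: taking $\E_{\bm x\sim\mathcal{P}}$, the first term of $\pi$ averages to $\Pr_{\bm x\sim\mathcal{P},\bm x'\sim\mathcal{Q}}[\mathcal{A}(\bm x,\bm x')=0]\ge 1-\delta$ and the second to $\Pr_{\bm x'\sim\mathcal{Q},\bm x\sim\mathcal{P}}[\mathcal{A}(\bm x',\bm x)=1]\ge 1-\delta$, hence $\E_{\bm x\sim\mathcal{P}}[\pi(\bm x)]\ge 1-\delta$. Then $\Pr_{\bm x\sim\mathcal{P}}[\mathcal{B}(\bm x)=0]=\E_{\bm x\sim\mathcal{P}}[\pi(\bm x)^T]\ge\bigl(\E_{\bm x\sim\mathcal{P}}[\pi(\bm x)]\bigr)^T\ge (1-\delta)^T\ge 1-T\delta\ge 1-\sqrt{2\delta}$ by Jensen (convexity of $t\mapsto t^T$), Bernoulli's inequality, and $T\le\sqrt{2/\delta}$. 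It is essential that averaging yields $1-\delta$ and not the $1-2\delta$ a union bound would give, as otherwise both sides cannot simultaneously reach $\sqrt{2\delta}$.

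\textbf{The $\mathcal{Q}$-side (the crux).} The key structural claim I would establish is: for every $s\in[0,1]$, $\Pr_{\bm x\sim\mathcal{Q}}[\pi(\bm x)\ge s]\le 2(1-s)$. To prove it, let $S=\{x:\pi(x)\ge s\}$, $p=\mathcal{Q}(S)$ (assume $p>0$), and write $\beta(x)=\Pr_{\bm x'\sim\mathcal{Q}}[\mathcal{A}(x,\bm x')=0]$, $\gamma(x)=\Pr_{\bm x'\sim\mathcal{Q}}[\mathcal{A}(\bm x',x)=1]$, so $\beta(x)+\gamma(x)\ge 2s$ for $x\in S$. Averaging over $\bm x\sim\mathcal{Q}\!\mid_S$ and splitting the inner $\bm x'$ according to whether it falls in $S$: $\E_{\mathcal{Q}|_S}[\beta]\le p\,a+(1-p)$ and $\E_{\mathcal{Q}|_S}[\gamma]\le p\,b+(1-p)$, where $a=\Pr_{\bm u,\bm v\sim\mathcal{Q}|_S}[\mathcal{A}(\bm u,\bm v)=0]$ and $b=\Pr_{\bm u,\bm v\sim\mathcal{Q}|_S}[\mathcal{A}(\bm v,\bm u)=1]$; since the pair $(\bm u,\bm v)$ is i.i.d.\ (exchangeable), $b=\Pr_{\bm u,\bm v\sim\mathcal{Q}|_S}[\mathcal{A}(\bm u,\bm v)=1]=1-a$, so the two bounds sum to $2-p$, giving $2s\le 2-p$, i.e.\ $p\le 2(1-s)$. (This step is where ``using both orders'' in $\bm b_i$ pays off: with only one order there is no exchangeability contradiction, and a large $\mathcal{Q}$-mass could behave $\mathcal{P}$-like.) Finally, by the layer-cake formula and this claim,
\[
\Pr_{\bm x\sim\mathcal{Q}}[\mathcal{B}(\bm x)=0]=\E_{\bm x\sim\mathcal{Q}}[\pi(\bm x)^T]=\int_0^1 T s^{T-1}\Pr_{\bm x\sim\mathcal{Q}}[\pi(\bm x)>s]\,ds\le\int_0^1 T s^{T-1}\cdot 2(1-s)\,ds=\frac{2}{T+1}\le\sqrt{2\delta},
\]
using $T+1\ge\sqrt{2/\delta}$.

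\textbf{Wrap-up.} Such an integer $T$ always exists since the interval $[\sqrt{2/\delta}-1,\sqrt{2/\delta}]$ has length $1$, and $\mathcal{B}$ performs $T=\Theta(1/\sqrt{\delta})$ rounds, each costing one sample from $\mathcal{Q}$, one call to $\mathcal{A}$, and $O(1)$ overhead, for total running time $O\bigl((T_{\mathrm{sample}}+T_{\mathrm{dist}})/\sqrt{\delta}\bigr)$ as required. I expect the structural claim on the $\mathcal{Q}$-side—together with tracking constants so that both error probabilities land at exactly $\sqrt{2\delta}$, which is what forces the averaging design on the $\mathcal{P}$-side and the two-orders design of $\bm b_i$ on the $\mathcal{Q}$-side—to be the part needing the most care; the remaining ingredients (Jensen, Bernoulli, the elementary integral, reducing to $\mathcal{A}\in\{0,1\}$) are routine.
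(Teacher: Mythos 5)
Your proposal is correct, but it reaches the same $\sqrt{2\delta}$ bound by a genuinely different algorithm and analysis. The paper's $\mathcal{B}$ calls $\mathcal{A}$ \emph{twice} per round, once in each order, and returns $0$ only if every round "agrees"; the $\mathcal{P}$-side is then a union bound over $2M$ calls, and the $\mathcal{Q}$-side conditions on $\mathcal{A}$'s internal coins and observes that the $M{+}1$ events "variable $j$ among the exchangeable tuple $(\bm x,\bm x_1,\dots,\bm x_M)$ is the unique one all pairwise comparisons mark as $\mathcal{P}$-like" are disjoint and identically distributed, hence each has probability $\le 1/(M{+}1)$. Your $\mathcal{B}$ instead calls $\mathcal{A}$ \emph{once} per round in a coin-tossed order, which collapses each round to a Bernoulli$(\pi(x))$, so $\Pr[\mathcal{B}(x)=0]=\pi(x)^T$; the $\mathcal{P}$-side then exploits that averaging over the coin yields per-round error $\delta$ rather than $2\delta$ and finishes with Jensen plus Bernoulli, while the $\mathcal{Q}$-side hinges on a \emph{different}, level-set-local exchangeability argument yielding the pointwise tail bound $\Pr_{\mathcal{Q}}[\pi\ge s]\le 2(1-s)$, integrated against $Ts^{T-1}$. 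Both approaches give the same constant on each side with $\Theta(1/\sqrt{\delta})$ oracle calls; the paper's is more combinatorial and self-contained, yours is more analytical and yields a potentially reusable tail bound on $\pi$ under $\mathcal{Q}$. One small point to make explicit: for the rounds to be conditionally i.i.d.\ given $\bm x=x$, fresh internal coins for $\mathcal{A}$ must be drawn on each of the $T$ calls; your write-up implicitly assumes this but should state it.
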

\begin{proof}
Let $\bm x \in \mathcal{D}$ such that either $\bm x \sim \mathcal{P}$ or $\bm x \sim \mathcal{Q}$.
Let $M$ be a positive integer that we will fix later.
The randomized algorithm $\mathcal{B}$ is the following:
\begin{enumerate}
    \item Initialize $\Delta = 0$.
    \item Repeat $M$ times:
    \begin{enumerate}
        \item Sample $\bm x' \sim \mathcal{Q}$.
        \item If $\mathcal{A}(\bm x, \bm x')$ returns $0$ and $\mathcal{A}(\bm x', \bm x)$ returns $1$, increment $\Delta$.
    \end{enumerate}
    \item If $\Delta = M$, return $0$. Else, return $1$.
\end{enumerate}
We now analyze the algorithm.

If $\bm x \sim \mathcal{P}$, then $\Pr_{\bm x \sim \mathcal{P}}(\mathcal{B}(\bm x)=1) =  \Pr(\Delta < M) \leq 2M\delta$.
This is because the probability that $\mathcal{A}$ is wrong in an iteration is at most $\delta$, so the probability that $\mathcal{A}$ is wrong in any of the $2M$ iterations is at most $2M\delta$.

On the other hand, if $\bm x \sim \mathcal{Q}$, we show that $\Pr_{\bm x \sim \mathcal{Q}}(\mathcal{B}(\bm x)=0)=\Pr(\Delta = M) \leq \frac{1}{M+1}$.
From now on, we condition on the internal randomness of $\mathcal{A}$, which makes $\mathcal{A}$ deterministic, and show that the same probability upper bound holds irrespective of this internal randomness.
This implies that the upper bound also holds unconditionally.
To begin, let $\bm x_1, ..., \bm x_M$ be the $M$ random variables corresponding to the $x'$ that are generated in the $M$ iterations.
Then we are interested in the event $E(\bm x, (\bm x_1, ..., \bm x_M))$ that $\Delta$ is incremented in each iteration.
First, note that $E(\bm x, (\bm x_1, ..., \bm x_M))$ has the same probability as $E(\bm x_i, (\bm x, \bm x_1, ..., \bm x_{i-1}, \bm x_{i+1}, ..., \bm x_M))$ for all $i \in [M]$, because $\bm x, \bm x_1, ..., \bm x_M$ are independently and identically distributed according to $\mathcal{Q}$.
Second, note that all the events $E(\bm x, (\bm x_1, ..., \bm x_M))$ and $E(\bm x_i, (\bm x, \bm x_1, ..., \bm x_{i-1}, \bm x_{i+1}, ..., \bm x_M))$ for all $i \in [M]$ are mutually exclusive
(e.g., for $i \neq j$, $E(\bm x_i, (\bm x, \bm x_1, ..., \bm x_{i-1}, \bm x_{i+1}, ..., \bm x_M))$ implies that $\mathcal{A}(\bm x_i, \bm x_j)$ returns $0$, but $E(\bm x_j, (\bm x, \bm x_1, ..., \bm x_{j-1}, \bm x_{j+1}, ..., \bm x_M))$ implies that $\mathcal{A}(\bm x_i, \bm x_j)$ returns $1$, which is a contradiction).
Hence, the probability that $\Delta$ is incremented in every iteration is at most $\frac{1}{M+1}$.
Note that this argument did not use the value of the internal randomness of $\mathcal{A}$, so the same probability upper bound also holds unconditionally.

Taking $M=1/\sqrt{2\delta}$ gives the desired probability bounds.
Finally, the time complexity of algorithm $\mathcal{B}$ is $O((T_{\mathrm{sample}} + T_{\mathrm{dist}})M) = O((T_{\mathrm{sample}} + T_{\mathrm{dist}})/\sqrt{\delta})$.
\end{proof}

\section*{Acknowledgements}
The authors thank David Steurer for helpful discussions. 
This project has received funding from the European Research Council (ERC) under the European Union’s Horizon 2020 research and innovation programme (grant agreement No 815464).

\addcontentsline{toc}{section}{References}
\bibliographystyle{amsalpha}
\bibliography{bib/mathreview,bib/dblp,bib/custom,bib/scholar}

\appendix

\section{Concentration bounds}
\label{sec:concentration_bounds}

The following fact is taken from \cite[Example 2.11]{wainwright_2019}.
\begin{fact}
    \label{fact:conc_gaussian_norm}
    Let $\bm X \sim N(0,\Id_n)$. Then
    \[
        \Pr\Paren{\Abs{\frac{1}{n} \Norm{\bm X}^2 - 1} \geq t} \leq 2 \exp\Paren{- \frac {nt^2}8}\,.
    \]
\end{fact}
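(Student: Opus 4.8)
The plan is to run the standard Chernoff argument for a sum of independent sub-exponential variables. Write $\bm X = (\bm X_1, \dots, \bm X_n)$ with $\bm X_1, \dots, \bm X_n$ independent standard Gaussians and set $\bm Z_i \defeq \bm X_i^2 - 1$, so that $\tfrac 1 n \Norm{\bm X}^2 - 1 = \tfrac 1 n \sum_{i} \bm Z_i$. The event $\bigl\{\abs{\tfrac 1 n \Norm{\bm X}^2 - 1} \geq t\bigr\}$ is the union of $\{\sum_i \bm Z_i \geq nt\}$ and $\{\sum_i \bm Z_i \leq -nt\}$, so it suffices to bound each of these one-sided events by $\exp(-nt^2/8)$ and take a union bound for the factor $2$.

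The one ingredient requiring a small computation is the moment generating function of a single $\bm Z_i$. From $\E[e^{\lambda \bm X_i^2}] = (1-2\lambda)^{-1/2}$ for $\lambda < \tfrac12$ we obtain $\E[e^{\lambda \bm Z_i}] = e^{-\lambda}(1-2\lambda)^{-1/2}$, and I would establish the sub-exponential bound
\[
  \E\bigl[e^{\lambda \bm Z_i}\bigr] \leq e^{2\lambda^2} \qquad \text{for all } \abs{\lambda} \leq \tfrac 1 4 \,.
\]
Taking logarithms, this amounts to showing $g(\lambda) \defeq 2\lambda^2 + \lambda + \tfrac 1 2 \log(1-2\lambda) \geq 0$ on $[-\tfrac 1 4, \tfrac 1 4]$: one has $g(0) = 0$, and on $[0,\tfrac14]$ a short computation gives $g'(0) = g'(\tfrac 1 4) = 0$ while $g''$ changes sign only once, which forces $g' \geq 0$ there; for $\lambda \in [-\tfrac 1 4, 0]$ the relevant second derivative stays positive and monotonicity is immediate. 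By independence this yields $\E[e^{\lambda \sum_i \bm Z_i}] \leq e^{2n\lambda^2}$ for $\abs{\lambda} \leq \tfrac 1 4$.

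To finish, fix $t \in (0,1)$ (the regime in which the fact is applied throughout the paper) and apply Markov's inequality to $e^{\lambda \sum_i \bm Z_i}$ with $\lambda = t/4 \in (0,\tfrac 1 4)$:
\[
  \Pr\Bigl(\sum_i \bm Z_i \geq nt\Bigr) \leq e^{-\lambda n t}\,\E\bigl[e^{\lambda \sum_i \bm Z_i}\bigr] \leq e^{2n\lambda^2 - n\lambda t} = e^{-nt^2/8} \,,
\]
and symmetrically with $\lambda = -t/4$ for the lower tail. The union bound then gives the claimed inequality.

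The main (and only) delicate point is choosing the sub-exponential constant sharply enough that the exponent comes out as exactly $nt^2/8$; a cruder constant follows instantly from Bernstein's inequality but does not reproduce the stated $8$. Everything else is routine, and since this is a textbook fact one could equally well leave the proof to the citation to \cite{wainwright_2019} already present in the excerpt.
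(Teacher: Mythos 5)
Your proof is correct and is the standard sub-exponential Chernoff argument. The paper itself gives no proof for this fact — it only cites \cite[Example 2.11]{wainwright_2019}, which establishes exactly this bound by exactly the route you take (MGF of $\chi^2_1 - 1$, sub-exponential parameters $(2, 1/4)$ in Wainwright's notation, then Chernoff with $\lambda = t/4$). The one cosmetic caveat, which you already flag, is that the bound should be stated for $t \in (0,1)$, since that is the range in which the optimizer $\lambda = t/4$ stays in the admissible interval; the paper invokes the fact only with tiny constant $t$, so this restriction is immaterial.
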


\section{Low-degree lower bound for negative-spike sparse Wishart model}
\label{sec:LD_lower_bound_SPCA}

\subsection{Low-degree framework}\label{sec:appendLDLR}
One significant barrier for proving computational lower bounds in high dimension statistics is that we need to show average-case hardness: we want to show that no efficient algorithms can succeed with high probability over random inputs. 
    It is notoriously difficult to show average-case hardness based on worst-case hardness assumptions such as $\mathrm{P} \neq \mathrm{NP}$, except in a few notable examples~\cite{regev2024lattices,CLWE,brennan2020reducibility}. 

The low-degree method, which is developed in~\cite{hopkins2017efficient,HopkinsThesis,hopkins2017power}, provides a way to give evidence for average-case hardness by showing that no efficient algorithms based on low-degree polynomials can succeed with high probability over random inputs.
In the context of hypothesis testing problems, it has been applied to a wide range of fundamental problems, such as community detection~\cite{hopkins2017efficient}, spiked matrix models~\cite{kunisky2019notes}, sparse principal component analysis~\cite{hopkins2017power, ding2022subexponentialtime}, and certifying restricted isometry property~\cite{ComplexityRIP}.

The crux of the techniques is to bound the projection of the likelihood ratio function onto the space of low-degree polynomials:
\begin{definition}[Definition 1.14 in~\cite{kunisky2019notes}]\label{def:LDLR}
    Let $\mathsf{V}^{\leq D}_n \subset L^2(\mathcal{Q}_n)$ denote the linear subspace of polynomials $\mathsf{S}_n \to \R$ of degree at most $D$.
    Let $\mathsf{P}^{\leq D}: L^2(\mathcal{Q}_n) \to \mathsf{V}^{\leq D}_n$ denote the orthogonal projection onto this linear subspace.\footnote{To clarify, the orthogonal projection is with respect to the inner product induced by $\mathcal{Q}_n$ operator on this subspace.}
    Finally, define the \emph{$D$-low-degree likelihood ratio ($D$-LDLR)} as $L_n^{\leq D} \coloneqq \mathsf{P}^{\leq D} L_n$.
\end{definition}

Then in hypothesis testing a low-degree lower bound shows that efficient algorithms based on thresholding low-degree polynomials are inherently obstructed from distinguishing the two distributions. 
Particularly, we have the following proposition connecting the low-degree likelihood ratio and the distribution separation by low-degree polynomials:
\begin{proposition}[\cite{hopkins2017efficient,HopkinsThesis,kunisky2019notes}]
    \label{prop:LDLR}
    Let $\mathcal{P}$ and $\mathcal{Q}$ be two sequences of probability measures, and $L_n=\frac{dP}{dQ}$ be their likelihood ratio. 
    Then there exists $\varepsilon > 0$ and $D = D(n) \ge (\log n)^{1+\varepsilon}$ for which $\|L_n^{\leq D}\|$ remains bounded as $n \to \infty$ if and only if for any degree $\leq D$ polynomials $f(\cdot)$ we have
    \begin{equation*}
        \frac{\E_{\mathcal{P}} f}{\sqrt{\E_{\mathcal{Q}} f^2}}\leq O(1)\,.
    \end{equation*}
\end{proposition}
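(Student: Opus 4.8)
The plan is to prove the stronger, non-asymptotic identity that, for every $n$ and every degree bound $D$,
\[
\bignorm{L_n^{\leq D}} \;=\; \sup_{\substack{f \in \mathsf{V}^{\leq D}_n \\ \E_{\mathcal{Q}_n} f^2 \neq 0}} \frac{\E_{\mathcal{P}_n} f}{\sqrt{\E_{\mathcal{Q}_n} f^2}}\,,
\]
i.e.\ the $D$-LDLR computes exactly the value of the best degree-$\leq D$ polynomial distinguisher in the $L^2(\mathcal{Q}_n)$ sense. Granting this identity, the proposition is immediate: for a fixed sequence $D = D(n)$ the left-hand side stays bounded as $n \to \infty$ precisely when the supremum on the right does, i.e.\ precisely when every degree-$\leq D$ polynomial $f$ obeys $\E_{\mathcal{P}_n} f \leq O(1)\cdot\sqrt{\E_{\mathcal{Q}_n} f^2}$. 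Since this equivalence holds for each admissible $D(n)$, the ``there exists $\varepsilon>0$ and $D=D(n)\geq(\log n)^{1+\varepsilon}$'' quantifier transfers verbatim between the two formulations.

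To establish the identity I would work in $L^2(\mathcal{Q}_n)$ under the standing assumptions that $\mathcal{P}_n \ll \mathcal{Q}_n$ (so $L_n = dP/dQ$ is defined) and that degree-$\leq D$ polynomials are square-integrable against $\mathcal{Q}_n$ and integrable against $\mathcal{P}_n$. Since the sample space is finite-dimensional, $\mathsf{V}^{\leq D}_n$ is a finite-dimensional subspace, so the orthogonal projection $\mathsf{P}^{\leq D}$ onto it is well-defined and $L_n^{\leq D} = \mathsf{P}^{\leq D}L_n$ lies in $\mathsf{V}^{\leq D}_n$. The key computation is the change-of-measure identity: for any $f \in \mathsf{V}^{\leq D}_n$,
\[
\langle f, L_n \rangle_{L^2(\mathcal{Q}_n)} = \E_{\mathcal{Q}_n}\!\big[ f \cdot \tfrac{dP}{dQ} \big] = \E_{\mathcal{P}_n}[f]\,.
\]
By the defining property of orthogonal projection, $\langle f, L_n \rangle = \langle f, \mathsf{P}^{\leq D}L_n \rangle = \langle f, L_n^{\leq D}\rangle$ for all $f \in \mathsf{V}^{\leq D}_n$, and hence $\E_{\mathcal{P}_n}[f] = \langle f, L_n^{\leq D}\rangle$ for every such $f$. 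Now Cauchy--Schwarz gives $|\langle f, L_n^{\leq D}\rangle| \leq \|f\|\cdot\|L_n^{\leq D}\|$ with equality attained at $f = L_n^{\leq D}$ (which is itself an admissible choice, being a polynomial of degree $\leq D$); dividing by $\|f\| = \sqrt{\E_{\mathcal{Q}_n} f^2}$ and taking the supremum yields the claimed identity, with the supremum being a genuine maximum.

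I do not anticipate a real obstacle: the heart of the matter is the elementary Hilbert-space fact that projecting onto a subspace computes the best correlation with that subspace, combined with the change of measure $\langle f, L_n\rangle_{\mathcal{Q}_n} = \E_{\mathcal{P}_n} f$. The only points requiring care are the measure-theoretic hypotheses — one wants $\mathcal{P}_n \ll \mathcal{Q}_n$ together with enough integrability that $\E_{\mathcal{P}_n}[f]$ is finite for every degree-$\leq D$ polynomial, which is exactly what lets $\mathsf{P}^{\leq D}L_n$ be defined coordinate-wise against an orthonormal basis $\{\psi_i\}$ of $\mathsf{V}^{\leq D}_n$ (via $\mathsf{P}^{\leq D}L_n = \sum_i \E_{\mathcal{P}_n}[\psi_i]\,\psi_i$) even in the regime where $L_n \notin L^2(\mathcal{Q}_n)$, i.e.\ when the $\chi^2$-divergence is infinite — and the routine bookkeeping of the asymptotic quantifiers, which the per-$(n,D)$ identity renders harmless. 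The statement and this argument are due to \cite{hopkins2017efficient,HopkinsThesis,kunisky2019notes}.
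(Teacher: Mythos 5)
Your proposal is correct and matches the standard argument in the cited references (Hopkins's thesis and the Kunisky--Wein--Bandeira notes): the variational identity $\bignorm{L_n^{\leq D}} = \sup_{f\in\mathsf{V}^{\leq D}_n}\E_{\mathcal P_n}f/\sqrt{\E_{\mathcal Q_n}f^2}$ via the change-of-measure inner product $\langle f,L_n\rangle_{L^2(\mathcal Q_n)}=\E_{\mathcal P_n}f$, projection, and Cauchy--Schwarz with equality at $f=L_n^{\leq D}$, followed by transferring the quantifiers per $(n,D)$. The paper itself does not reprove this statement but cites it, and your derivation is the same Hilbert-space duality the sources use; your remarks about absolute continuity (ruling out $\|f\|_{\mathcal Q_n}=0$ with $\E_{\mathcal P_n}f\neq0$) and about defining $\mathsf{P}^{\leq D}L_n$ coordinatewise when $L_n\notin L^2(\mathcal Q_n)$ are the right technical caveats.
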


The low-degree conjecture states that, for "sufficiently nice" distributions, low-degree lower bounds rule out all polynomial-time algorithms that strongly distinguish the distributions:
\begin{conjecture}[Informal,~\cite{HopkinsThesis,kunisky2019notes}]
    \label{conj:low-deg-informal}
    For "sufficiently nice" sequences of probability measures $\mathcal{P}$ and $\mathcal{Q}$, let $L_n=\frac{dP}{dQ}$ be their likelihood ratio.
        If there exists $\varepsilon > 0$ and $D = D(n) \ge (\log n)^{1+\varepsilon}$ for which $\|L_n^{\leq D}\|$ remains bounded as $n \to \infty$, then there is no polynomial-time algorithm that \emph{strongly distinguishes} $\mathcal{P}$ and $\mathcal{Q}$, i.e., distinguishes the two distributions with $1-o(1)$ probability.
\end{conjecture}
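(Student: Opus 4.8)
This statement is a \emph{conjecture}, not a theorem, and it cannot be a theorem as literally phrased: a true implication of this form would exhibit a concrete, efficiently samplable distributional problem with no polynomial-time algorithm, hence would entail unconditional average-case hardness of a kind well beyond current techniques. So rather than a proof I can only describe the heuristic one would try to turn into a theorem under a suitable formalization of ``sufficiently nice,'' and indicate where such an attempt stalls.

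The plan would start from the equivalence in~\Cref{prop:LDLR}: boundedness of $\norm{L_n^{\leq D}}$ for some $D \geq (\log n)^{1+\varepsilon}$ is exactly the statement that no polynomial $f$ of degree $\leq D$ achieves $\E_{\cP} f / \sqrt{\E_{\cQ} f^2} = \omega(1)$, i.e.\ no low-degree test separates $\cP$ from $\cQ$. One would then argue that degree-$D$ polynomials with $D$ slightly superlogarithmic already ``contain'' the algorithmic primitives used for high-dimensional testing: the top eigenvalue of a matrix with entries polynomial in the data is approximated by a low-degree polynomial via a few rounds of power iteration; approximate message passing and local algorithms on sparse instances unroll into constant- or logarithmic-degree polynomials; and constant-degree sum-of-squares relaxations are themselves degree-$O(1)$ spectral objects. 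Granting that every polynomial-time test can, for ``nice'' $(\cP,\cQ)$, be simulated up to the usual $o(1)$ loss in advantage by a polynomial of degree $(\log n)^{O(1)}$, the contrapositive of~\Cref{prop:LDLR} would give the conclusion. The scaling is the point: degree $(\log n)^{1+\varepsilon}$ morally corresponds to quasi-polynomial time and degree $n^{\Omega(1)}$ to genuinely exponential-scale time, so the same template predicts the whole time-versus-signal tradeoff, not just the polynomial-time threshold.

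The main obstacle is that the simulation claim above is false without restrictions, which is why the qualifier ``sufficiently nice'' is load-bearing and, to date, not made precise in a way that yields a theorem. There are efficiently solvable testing problems on which every low-degree polynomial fails: detecting a planted solution to a noisy system of linear equations over a finite field (solved by Gaussian elimination) and certain planted problems solved by lattice reduction / LLL~\cite{zadik2022lattice,diakonikolas2022non}. A correct formalization must therefore impose hypotheses on $(\cP,\cQ)$ — some blend of distributional symmetry, robustness to a small amount of noise, and absence of exact algebraic structure — strong enough to provably exclude these pathologies, and then still carry out the simulation, which nobody knows how to do in general. In the absence of such an argument, the support for the conjecture is empirical and structural: its predicted thresholds match the conjectured computational phase transitions for planted clique, community detection, tensor and (negative-spike, sparse) Wishart PCA — including the regime underlying~\Cref{conj:hardSSW} — and agree with the independent predictions of the statistical-query and sum-of-squares lower-bound frameworks wherever those also apply.
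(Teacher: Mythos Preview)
Your assessment is correct: this is explicitly a conjecture in the paper, and the paper does not attempt to prove it either. The paper merely states it, then immediately notes that the formal version (Conjecture~2.2.4 in~\cite{HopkinsThesis}, updated in~\cite{holmgren2020counterexamples}) only rules out polynomial-time distinguishers between $U_\delta\mathcal{P}$ and $\mathcal{Q}$ for the Ornstein--Uhlenbeck noise operator $U_\delta$, which is precisely one way of encoding the ``robustness to a small amount of noise'' condition you mention as part of any serious formalization of ``sufficiently nice.'' Your discussion of the heuristic support, the known counterexamples (Gaussian elimination, LLL), and the role of the noise hypothesis is accurate and matches the standard understanding; there is nothing to correct.
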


The formal version of the conjecture, given originally as Conjecture 2.2.4 in~\cite{HopkinsThesis} and further updated in~\cite{holmgren2020counterexamples}, only rules out polynomial time-algorithms that distinguish between $U_\delta \mathcal{P}$ and $\mathcal{Q}$, for any $\delta>0$, where $U_\delta$ is the Ornstein-Uhlenbeck noise operator: $U_\delta \mathcal{P}$ is sampled by drawing $x \sim \mathcal{P}$ and $y \sim \mathcal{Q}$ and outputting $\sqrt{1-\delta}x + \sqrt{\delta}y$.
For our sparse linear regression result we need $\delta \leq ck$ for some small enough constant $c>0$ such that $U_\delta \mathcal{P}$ continues to encode a negative spike of magnitude $1 - \Theta(1/k)$.

\subsection{LD lower bound for negative-spike sparse Wishart model}

In this section, we provide evidence of hardness for the sparse negative-spike Wishart model, based on the low-degree likelihood ratio (LDLR) method. 

We show that the low-degree likelihood ratio is bounded for the sparse negative-spike Wishart model in the hard regime conjectured in~\cref{conj:hardSSW}.

Our proof follows closely the proof of~\cite{ComplexityRIP}. The main differences are that in our planted distribution the prior of $\bm\secret$ is the uniform distribution over $k$-sparse vectors (instead of a sparse Rademacher distribution, in which the exact sparsity can vary) and that the first coordinate of $\bm \secret$ is non-zero.
\begin{theorem}[Low-degree lower bound for $\nspca_{\theta}$]\label{thm:LDLRWishart}
    Fix $\delta \in (0, 0.1]$.
    Suppose that $n = o(\min(d,k^{2-\delta}))$.
    Consider the planted distribution $\mathcal{P}_n$ formed from $n$ samples from $\cP$ and null distribution $\mathcal{Q}_n$ formed from $n$ samples from $\cQ$ in $\nspca_{\theta}$ defined in \cref{def:spiked_wishart}.
    Then for any degree-$k^{\delta}$ polynomial $f(z_1,z_2,\ldots,z_n): \R^{n\times d}\to \R$ such that $\E_{\cQ} f=0$, we have
    \begin{equation*}
        \frac{\E_{\mathcal{P}_n} f}{\sqrt{\E_{\mathcal{Q}_n} f^2}}\leq 1+o(1)\,.
    \end{equation*}
\end{theorem}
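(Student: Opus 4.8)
The plan is to bound the low-degree likelihood ratio $\|L_n^{\leq D}\|$ for $D = k^\delta$ and then invoke \Cref{prop:LDLR}; by the definition of the orthogonal projection, $\|L_n^{\leq D}\|^2 = \sup_{f : \deg f \le D,\ \E_{\cQ} f^2 = 1} (\E_{\cP_n} f)^2$ (up to the $\E_{\cQ}f = 0$ normalization, which is already built into the statement), so it suffices to show $\|L_n^{\leq D}\|^2 \le 1 + o(1)$. The first step is to write out $\|L_n^{\leq D}\|^2$ explicitly using the structure of the problem. Since $\cQ_n = N(0,\Id)^{\otimes n}$ is a product of standard Gaussians, the natural orthonormal basis for $L^2(\cQ_n)$ is the multivariate Hermite polynomials $h_\alpha(Z) = \prod_{i,j} h_{\alpha_{ij}}(Z_{ij})$ indexed by $\alpha \in \N^{n \times d}$, and $\|L_n^{\leq D}\|^2 = \sum_{|\alpha| \le D} \widehat{L_n}(\alpha)^2 = \sum_{|\alpha| \le D} \Paren{\E_{\cP_n} h_\alpha}^2$. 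The task thus reduces to estimating the Hermite coefficients of the planted distribution.

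The second step is to compute $\E_{\cP_n} h_\alpha$ by conditioning on the planted vector $\bm x$. Under $\cP$, a single sample is $N(0, \Id - \theta \bm x \bm x^\top)$, so conditioned on $\bm x$ the $n$ rows are i.i.d.\ and the Hermite coefficient factorizes over rows; moreover, the within-row Hermite moments of a Gaussian with covariance $\Id - \theta \bm x\bm x^\top$ can be expanded in powers of $\theta$ and $\bm x$, producing terms that are (up to combinatorial factors) monomials in the entries of $\bm x\bm x^\top$. After taking the expectation over $\bm x$ (uniform over the sparse sign support with the fixed first coordinate), one gets an expression of the form $\sum_\alpha (\E_{\cP_n} h_\alpha)^2 = \E_{\bm x, \bm x'}\Brac{\text{(something involving } \langle \bm x, \bm x'\rangle\text{)}}$ where $\bm x, \bm x'$ are two independent copies of the prior — this is the standard ``two independent planted vectors'' reformulation of the second moment of the LDLR. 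Concretely I expect something like $\|L_n^{\le D}\|^2 = \E_{\bm x, \bm x'} \Brac{\sum_{t=0}^{D} c_{t} \cdot \theta^{2t} \langle \bm x, \bm x'\rangle^{2t}}$ for combinatorial coefficients $c_t$ coming from counting Hermite multi-indices of total degree $2t$ spread across $n$ rows, with $c_t \le \binom{n+t}{t}\cdot(\text{poly in }t)$ or similar, which is where the $n = o(k^{2-\delta})$ and $n = o(d)$ hypotheses enter.

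The third step is to control $\E_{\bm x,\bm x'} \langle \bm x, \bm x'\rangle^{2t}$: since $\bm x, \bm x'$ are $k$-sparse with entries $\pm 1/\sqrt{k+1}$ and a shared fixed first coordinate, $\langle \bm x, \bm x'\rangle = \tfrac{1}{k+1} + \tfrac{1}{k+1}\sum_{j \in S \cap S'}\epsilon_j\epsilon'_j$, whose moments are governed by the typical overlap $|S \cap S'| \approx k^2/d$ and the random signs; the dominant contribution to $\E\langle \bm x,\bm x'\rangle^{2t}$ is roughly $(k/d + 1/k)^{2t}$ up to lower-order terms. Plugging in $\theta = 1 - \Theta(1/k) \le 1$, the series becomes $\sum_{t \le D} c_t \cdot O(k/d + 1/k)^{2t}$, and with $c_t$ at most exponential in $t \cdot \polylog$ and $D = k^\delta$, each term for $t \ge 1$ is $o(1)$ and the whole tail sums to $o(1)$, leaving the $t = 0$ term equal to $1$; hence $\|L_n^{\le D}\|^2 \le 1 + o(1)$. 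The main obstacle — and the step that requires genuine care rather than bookkeeping — is the precise combinatorial accounting in the second step: getting the Hermite expansion of the Gaussian-with-rank-one-perturbation moments correct, tracking how multi-indices of degree $2t$ distribute over the $n$ rows (which produces the $n$-dependence and is exactly where the $k^{2-\delta}$ threshold is tight), and matching the bookkeeping to \cite{ComplexityRIP} while accounting for the two structural differences the authors flag (uniform-over-$k$-sparse prior instead of sparse Rademacher, and the pinned nonzero first coordinate). The first-coordinate pinning in particular slightly changes the overlap distribution and must be carried through the moment computation; I would handle it by writing $\langle \bm x, \bm x'\rangle$ as a constant plus a mean-zero fluctuation as above and verifying the constant shift only helps (it does not blow up any moment since $\theta \cdot \tfrac{1}{k+1} \cdot 2t \le 2t/(k+1) = o(1)$ for $t \le k^\delta$).
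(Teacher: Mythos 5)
Your high-level strategy matches the paper's: reduce to bounding $\|L_n^{\leq D}\|^2$, expand in Hermite polynomials, condition on the planted vector, and reduce to controlling overlap moments $\E\langle \bm x^{(1)}, \bm x^{(2)}\rangle^{2\ell}$ between two independent draws from the sparse prior. However, there is a genuine gap, and it sits exactly where you flag the ``main obstacle.''

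The paper does not carry out the Hermite bookkeeping from scratch; it imports the bound
\[
\Norm{L_{n}}^2 \le \sum_{\ell=0}^{\lfloor D/2\rfloor}\frac{(2n+4D)^\ell}{\ell!}\cdot\frac{\theta^{2\ell}}{4^\ell}\cdot \E\iprod{\bm x^{(1)},\bm x^{(2)}}^{2\ell}
\]
directly from Equations (8) and (10) of~\cite{ComplexityRIP}, which is where the explicit $n$- and $D$-dependence of the combinatorial coefficients $c_t$ is settled. Your proposal sketches the expansion, guesses $c_t \le \binom{n+t}{t}\cdot\mathrm{poly}(t)$ ``or similar,'' and defers the verification to a careful matching against~\cite{ComplexityRIP}, but never actually establishes it. Without that inequality in hand the argument is a plausible outline, not a proof; this step is precisely where the $n = o(k^{2-\delta})$ threshold is determined and cannot be treated as bookkeeping.

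Separately, your estimate for the overlap moment is off in a way that matters quantitatively. You claim $\E\langle\bm x,\bm x'\rangle^{2t} \approx (k/d + 1/k)^{2t}$. The correct bound, proved in the paper's Lemma (via sub-Gaussian moments of the random-sign sum over $S_1\cap S_2$ and a binomial moment bound on $|S_1\cap S_2|$), is $\E\langle\bm x_{\setminus 1}^{(1)},\bm x_{\setminus 1}^{(2)}\rangle^{2\ell} \le (C\ell)^\ell\bigl(\tfrac1d + \tfrac{\ell}{k^2}\bigr)^\ell$. Your estimate is missing both the $\ell^\ell$ growth from the sub-Gaussian moment of $\sum_{i\in S_1\cap S_2}\rho_i$ and the $\ell/k^2$ (rather than $1/k^2$) term from the upper tail of the hypergeometric overlap. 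These $\ell$-dependent factors are what make the admissible sample size scale as $k^{2-\delta}$ for degree $k^\delta$ rather than simply $k^2$; an estimate without them would yield a result that appears stronger than what is actually provable. Your treatment of the pinned first coordinate (splitting off the $\bm x_1^{(1)}\bm x_1^{(2)}$ constant and bounding it by $(4/(k+1)^2)^\ell$) is essentially what the paper does and is fine.
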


\begin{proof}
    For degree-$D$ polynomials, the maximum value of $\frac{\E_{\mathcal{P}_n} f}{\sqrt{\E_{\mathcal{Q}_n} f^2}}$ is given by (see~\cite{HopkinsThesis}) \[\Norm{L_{n,\gamma,\theta,\chi}}^2\coloneqq \E\Paren{\Paren{\frac{d{\mathcal{P}_n}}{d{\mathcal{Q}_n}}}^{\leq D}}^2\,.\]

    Equations (8) and (10) in~\cite{ComplexityRIP} provide the bound
    \begin{align*}
    \Norm{L_{n,\gamma,\theta,\chi}}^2
    &\leq \sum_{\ell=0}^{\lfloor D/2 \rfloor} \frac{(2n+4D)^\ell}{\ell!}\cdot \frac{\theta^{2\ell}}{4^\ell}\cdot \E \iprod{\bm x^{(1)},\bm x^{(2)}}^{2\ell}\\
    &\leq 1 + \sum_{\ell=1}^{\lfloor D/2 \rfloor}  
    \Paren{\frac{4\theta^2 \max(n, D)}{\ell}}^\ell \E\iprod{\bm x^{(1)},\bm x^{(2)}}^{2\ell} \\
    &\leq 1 + \sum_{\ell=1}^{\lfloor D/2 \rfloor}  
    \Paren{\frac{4 \max(n, D)}{\ell}}^\ell \E\iprod{\bm x^{(1)},\bm x^{(2)}}^{2\ell}\,,
    \end{align*}
    where $\bm x^{(1)}, \bm x^{(2)}$ are independently sampled from the prior of the spike vector in the planted distribution, i.e.,
    the first coordinates $\bm x_1^{(1)}, \bm x_1^{(2)}$ are sampled uniformly from $\Set{\pm \sqrt{\frac{1}{k+1}}}$ and the rest of the coordinates $\bm x_{\setminus 1}^{(1)}, \bm x_{\setminus 1}^{(2)}$ are sampled uniformly from $k$-sparse $\Set{\pm \sqrt{\frac{1}{k+1}},0}^d$ vectors.

    Now we note that
     \begin{align*}
    \E\iprod{\bm x^{(1)}, \bm x^{(2)}}^{2\ell} &=  
    \E\Paren{\iprod{\bm x_{\setminus 1}^{(1)},\bm x_{\setminus 1}^{(2)}}+ \bm x_1^{(1)}\bm x_1^{(2)}}^{2\ell}  \\
    & \leq 2^{2\ell} \cdot \Paren{\E\iprod{\bm x_{\setminus 1}^{(1)},\bm x_{\setminus 1}^{(2)}}^{2\ell}+ \E \Paren{\bm x_1^{(1)}\bm x_1^{(2)}}^{2\ell}} \\
    & = 2^{2\ell}  \cdot \E\iprod{\bm x_{\setminus 1}^{(1)},\bm x_{\setminus 1}^{(2)}}^{2\ell}
    + \Paren{\frac{4}{(k+1)^2}}^\ell\,,
     \end{align*}
     so 
     \begin{align*}
        \Norm{L_{n,\gamma,\theta,\chi}}^2
        &\leq 1 + \sum_{\ell=1}^{\lfloor D/2 \rfloor} \Paren{\frac{16 \max(n, D)}{\ell}}^\ell \cdot \E\iprod{\bm x_{\setminus 1}^{(1)},\bm x_{\setminus 1}^{(2)}}^{2\ell} + \sum_{\ell=1}^{\lfloor D/2 \rfloor} \Paren{\frac{16\max(n,D)}{(k+1)^2\ell}}^\ell\,.
     \end{align*}

Using that $n = o(k^2)$ and $D = o(k^2)$, we have
 \begin{equation*}
    \sum_{\ell=1}^{\lfloor D/2 \rfloor} \Paren{\frac{16\theta^2 \max(n, D)}{(k+1)^2\ell}}^\ell \leq o(1)\,.
 \end{equation*}

For the remaining term, we have by~\Cref{lem:exp-bound} that there is an absolute constant $C > 0$ such that
\begin{align*}
    \sum_{\ell=1}^{\lfloor D/2 \rfloor} \Paren{\frac{16\max(n,D)}{\ell}}^\ell\cdot \E\iprod{\bm x_{\setminus 1}^{(1)},\bm x_{\setminus 1}^{(2)}}^{2\ell}
    &\leq \sum_{\ell=1}^{\lfloor D/2 \rfloor} \Paren{C\cdot \max(n,D) \cdot \Paren{\frac{1}{d} + \frac{\ell}{k^2}}}^\ell\,.
\end{align*}
For this sum to be bounded by $o(1)$, we want both $\max(n, D) = o(d)$ and $\max(n, D) = o(k^2/D)$. This is achieved if $n = o(d)$ and $nD = o(k^2)$ and $D = o(k)$, so it is achieved under the conditions of the theorem with $D \leq k^{\delta}$.
Then 
\[\Norm{L_{n,\gamma,\theta,\chi}}^2 \leq 1 + o(1)\,.\]

\end{proof}

\begin{lemma}[Expectation bound]\label{lem:exp-bound}
    In the setting of~\Cref{thm:LDLRWishart}, there exists an absolute constant $C > 0$ such that
    \begin{equation}
        \E\iprod{{\bm x}_{\setminus 1}^{(1)}, \bm x_{\setminus 1}^{(2)}}^{2\ell} \leq (C\ell)^{\ell} \Paren{\frac{1}{d} + \frac{\ell}{k^2}}^\ell\,. 
    \end{equation}
\end{lemma}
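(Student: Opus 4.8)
The plan is to condition on the two random supports, reduce the inner product to a signed sum over their intersection, and then control the (random) size of that intersection via its factorial moments. Write $\bm x^{(i)}_{\setminus 1} = \tfrac{1}{\sqrt{k+1}} \sum_{j \in \bm S_i} \bm\sigma^{(i)}_j\, e_j$, where $\bm S_i \subseteq [d]$ is the support (a uniformly random $k$-element set) and the $\bm\sigma^{(i)}_j$ are i.i.d.\ Rademacher signs, independent of $\bm S_1, \bm S_2$. Then
\[
  \iprod{\bm x^{(1)}_{\setminus 1}, \bm x^{(2)}_{\setminus 1}} = \frac{1}{k+1} \sum_{j \in \bm S_1 \cap \bm S_2} \bm\tau_j, \qquad \bm\tau_j := \bm\sigma^{(1)}_j \bm\sigma^{(2)}_j,
\]
and, conditionally on $(\bm S_1, \bm S_2)$, the $\bm\tau_j$ are i.i.d.\ Rademacher with exactly $\bm m := \abs{\bm S_1 \cap \bm S_2}$ of them. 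Using the standard Rademacher moment bound $\E\bigl(\sum_{i=1}^m \varepsilon_i\bigr)^{2\ell} \le (2\ell-1)!!\, m^\ell \le (2\ell)^\ell m^\ell$ (from $\cosh t \le e^{t^2/2}$ and comparing power series with a Gaussian, or a direct count of perfect matchings) and taking expectations over the supports, we get $\E \iprod{\cdot,\cdot}^{2\ell} \le (k+1)^{-2\ell} (2\ell)^\ell\, \E[\bm m^\ell]$.

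The heart of the argument is then to show $\E[\bm m^\ell] \le (\lambda + \ell)^\ell$ with $\lambda := k^2/d = \E[\bm m]$. I would bound the falling-factorial moments first: $\E[\bm m^{\underline{r}}]$ equals the expected number of ordered $r$-tuples of distinct elements of $[d]$ lying in $\bm S_1 \cap \bm S_2$, so by a union bound over the at most $d^r$ such tuples, independence of $\bm S_1$ and $\bm S_2$, and $\Pr[j_1,\dots,j_r \in \bm S_1] = \prod_{i=0}^{r-1}\tfrac{k-i}{d-i} \le (k/d)^r$, we obtain $\E[\bm m^{\underline{r}}] \le d^r (k/d)^{2r} = \lambda^r$. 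Converting ordinary powers to falling factorials, $\E[\bm m^\ell] = \sum_{r} S(\ell,r)\, \E[\bm m^{\underline{r}}] \le \sum_{r=1}^{\ell} S(\ell,r)\, \lambda^r$, where $S(\ell,r)$ are Stirling numbers of the second kind. Finally I would invoke the elementary bound $S(\ell,r) \le \binom{\ell}{r} r^{\ell-r}$ — which follows by mapping a partition injectively to the pair consisting of its set of block-minima and the assignment of the remaining $\ell-r$ elements to one of the $r$ blocks — so that $\sum_{r=1}^{\ell} S(\ell,r)\lambda^r \le \sum_{r=0}^{\ell} \binom{\ell}{r} \ell^{\ell-r} \lambda^r = (\ell + \lambda)^\ell$ by the binomial theorem.

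Combining the pieces, $\E \iprod{\cdot,\cdot}^{2\ell} \le (2\ell)^\ell (k+1)^{-2\ell} (\lambda + \ell)^\ell \le (2\ell)^\ell \bigl(\tfrac1d + \tfrac{\ell}{k^2}\bigr)^\ell$, where the last step uses $\lambda/k^2 = 1/d$ and $(k+1)^2 \ge k^2$; this is the claimed bound with $C = 2$. I do not anticipate a genuine obstacle: the points requiring the most care are the conditional-independence bookkeeping in the first step (that the products $\bm\tau_j$ are i.i.d.\ Rademacher given the supports and that their count is exactly $\bm m$) and the Stirling-number estimate. Should one prefer to avoid the latter, one can instead observe that $\sum_{r} S(\ell,r)\lambda^r = \E[\mathrm{Poi}(\lambda)^\ell]$ and bound this Poisson moment by $C^\ell(\lambda+\ell)^\ell$ using a Bernstein-type tail bound followed by moment integration, at the cost of a larger absolute constant $C$.
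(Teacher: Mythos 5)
Your proof is correct, and the first half (condition on the supports, reduce to a Rademacher sum over the intersection, then apply a Khintchine-type moment bound $\E(\sum_{i=1}^m \varepsilon_i)^{2\ell} \le (2\ell)^\ell m^\ell$) is essentially identical to the paper's, which invokes sub-Gaussianity of $\sum_{i\in\bm S_1\cap\bm S_2}\bm\rho_i$ with variance proxy $\abs{\bm S_1\cap\bm S_2}$. Where you diverge is in bounding $\E\abs{\bm S_1\cap\bm S_2}^\ell$. The paper introduces indicators $\bm a_i$ for the events that the $i$-th element of $\bm S_1$ lies in $\bm S_2$, dominates all mixed products $\E\prod_{i\in S}\bm a_i$ by those of \iid $\mathrm{Ber}(k/d)$ variables, and then cites a sharp Binomial moment inequality from \cite{ahle2022sharp}. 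You instead bound the falling-factorial moments $\E[\bm m^{\underline r}]\le (k^2/d)^r$ directly (via linearity over ordered $r$-tuples and independence of the two supports), convert to ordinary moments through Stirling numbers, use $S(\ell,r)\le\binom{\ell}{r}r^{\ell-r}$, and close with the binomial theorem to obtain $\E[\bm m^\ell]\le(\lambda+\ell)^\ell$. Both routes give the same order of bound; yours is more self-contained (no external citation) and yields the explicit constant $C=2$, at the cost of the small Stirling-number digression. Your injectivity argument for $S(\ell,r)\le\binom{\ell}{r}r^{\ell-r}$ is sound, and your alternative remark that $\sum_r S(\ell,r)\lambda^r=\E[\mathrm{Poi}(\lambda)^\ell]$ is in fact closest in spirit to the paper's Binomial-domination step.
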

\begin{proof}
Let $\bm S_1, \bm S_2 \subset [d]$ be the sets of non-zero indices of $\bm x_{\setminus 1}^{(1)}$ and $\bm x_{\setminus 1}^{(2)}$ respectively. We have 
\begin{align*}
    \E\iprod{{\bm x}_{\setminus 1}^{(1)}, \bm x_{\setminus 1}^{(2)}}^{2\ell}
    &= \frac{1}{(k+1)^{2\ell}} \E_{\bm S_1, \bm S_2} \E\Brac{\Paren{\sum_{i \in \bm S_1 \cap \bm S_2} \bm \rho_i}^{2\ell} \Biggr| \bm S_1, \bm S_2}\,,
\end{align*}
where $\bm \rho_i$ are \iid Rademacher variables.
Then $\sum_{i \in \bm S_1 \cap \bm S_2} \bm \rho_i$ is a sub-Gaussian random variable with variance proxy $|\bm S_1 \cap \bm S_2|$, so for an absolute constant $C > 0$,
\begin{align*}
    \E\iprod{{\bm x}_{\setminus 1}^{(1)}, \bm x_{\setminus 1}^{(2)}}^{2\ell}
    &\leq \frac{(C\ell)^{\ell}}{(k+1)^{2\ell}} \E \Abs{\bm S_1 \cap \bm S_2}^{\ell}\,.
\end{align*}
Let $\bm a_i \in \{0, 1\}$ for $i \in [k]$ be the indicator that the $i$-th element of $\bm S_1$ is also in $\bm S_2$. Then $\Abs{\bm S_1 \cap \bm S_2} = \bm a_1 + \ldots + \bm a_k$. Also let $\bm b_i$ for $i \in [k]$ be \iid $\mathrm{Ber}(\tfrac{k}{d})$ variables. Then for any $S \subseteq [k]$ we have
\[\E \prod_{i \in S} \bm a_i = \frac{k}{d} \frac{k-1}{d-1} \ldots \frac{k-|S|+1}{d-|S|+1} \leq \Paren{\frac{k}{d}}^{|S|} = \E \prod_{i \in S} \bm b_i\,.\]
Then also $\E (\bm a_1 + \ldots + \bm a_k)^\ell \leq \E \bm B^\ell$ where $\bm B$ is a $\mathrm{Bin}(k, \tfrac{k}{d})$ variable.
\cite[Corollary 1]{ahle2022sharp} gives that $\E \bm B^{\ell} \leq \Paren{\tfrac{k^2}{d} + \tfrac{\ell}{2}}^{\ell}$.
Therefore, overall 
\[\E\iprod{{\bm x}_{\setminus 1}^{(1)}, \bm x_{\setminus 1}^{(2)}}^{\ell} \leq \frac{(C\ell)^{\ell}}{(k+1)^{2\ell}} \Paren{\frac{k^2}{d} + \frac{\ell}{2}}^{\ell} \leq (C\ell)^{\ell} \Paren{\frac{1}{d} + \frac{\ell}{k^2}}^\ell\,.\]
\end{proof}

\section{Statistical query lower bound for negative-spike sparse Wishart model}\label{sec:SQWishart}

In this section, we prove a statistical query (SQ) lower bound for negative spike sparse Wishart model.
Our proof heavily relies on an almost equivalence established between LDLR lower bounds and SQ lower bounds~\cite{brennan2020statistical}, and resembles the proof in section 8.3 of their paper. 

\subsection{SQ framework}

The SQ framework is a restricted computational model where a learning algorithm can make certain types of queries to an oracle and get answers that are subject to a certain degree of noise~\cite{kearns1998efficient}.
We will focus on the SQ model with VSTAT queries which is used in~\cite{brennan2020statistical}, where the learning algorithm has access to the VSTAT oracle as defined below:
\begin{definition}[VSTAT oracle]
\label{VSTAT}
    Given a query $\phi: \R^d \rightarrow [0, 1]$ and a distribution $D$ over $\R^d$, the VSTAT(n) oracle returns $\E_{x \sim D} [ \phi (x) ] + \zeta$ for an adversarially chosen $\zeta \in \R$ such that $|\zeta| \leq \max \Big(\frac{1}{n}, \sqrt{\frac{\E [\phi] (1-\E [\phi])}{n}} \Big)$.
\end{definition}
One way to show an SQ lower bound is by computing the statistical dimension of the hypothesis testing problem, which is a measure on the complexity of the testing problem. 
In this paper, we use the following definition of statistical dimension introduced by~\cite{feldman2017statistical}:
\begin{definition}[Statistical dimension]
Let $\mu_{\emptyset}$ be some distribution with $\mathcal{D}_{\emptyset}$ as density function. Let $\cS=\Set{\mu_u}$ be some family of distributions indexed by $u$, such that 
$\mu_u$ has density function given by $\mathcal{D}_u$. 
Consider the hypothesis testing problem between:
\begin{itemize}
    \item Null hypothesis: sample \iid from $\mathcal{\mu}_{\emptyset}$, 
    \item Alternative hypothesis: sample \iid from $\mathcal{\mu}_{u}$ where $u$ is sampled from some prior distribution $\mu$.
\end{itemize}
 For $D_u \in \cS$, define the relative density $\Bar{D}_u(x) = \frac{D_u(x)}{D_{\emptyset}(x)}$ and the inner product $\iprod{f, g} = \E_{x \sim D_{\emptyset}} [ f(x) g(x) ]$. The statistical dimension $SDA(\cS, \mu, n)$ measures the tail of $\iprod{\Bar{D}_u, \Bar{D}_v} - 1$ with $u$, $v$ drawn independently from $\mu$:
\begin{align*}
    &SDA(\cS, \mu, n)\\
    &= \max \Big\{ q \in \N : \E_{u, v \sim \mu} \Bigbrac{|\iprod{\Bar{D}_u, \Bar{D}_v} - 1 | \mid A} \leq \frac{1}{n} \text{ for all events A s.t.} \Pr_{u, v \in \mu} (A) \geq \frac{1}{q^2} \Big\}\,.
\end{align*}
\end{definition}
We will use $SDA(n)$ or $SDA(\cS, n)$ when $\cS$ or $\mu$ are clear from the context.
In~\cite{feldman2017statistical}, it was shown that the statistical dimension is a lower bound on the SQ complexity of the hypothesis test using VSTAT oracles:
\begin{theorem}[Theorem 2.7 of \cite{feldman2017statistical}, Theorem A.5 of \cite{brennan2020statistical}]
\label{theorem_sqDimLB}
    Let $D_{\emptyset}$ be a null distribution and $\cS$ be a set of alternative distributions. Then any (randomized) statistical query algorithm which solves the hypothesis testing problem between $D_{\emptyset}$ and $\cS$ with probability at least $1-\gamma$ requires at least $(1-\gamma)SDA(\cS, n)$ queries to $VSTAT(\frac{n}{3})$.
\end{theorem}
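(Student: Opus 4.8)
Since~\Cref{theorem_sqDimLB} is the statistical-dimension lower bound of~\cite{feldman2017statistical}, restated in the $\mathrm{VSTAT}$ form used by~\cite[Theorem A.5]{brennan2020statistical}, the plan is to invoke it directly; but if one wanted to reprove it, here is the argument I would give. Fix a randomized SQ algorithm $\mathcal A$ making $q$ queries to $\mathrm{VSTAT}(n/3)$ and condition on its internal randomness, so that $\mathcal A$ becomes a deterministic decision tree whose nodes carry queries $\phi\colon\R^d\to[0,1]$ and which branches on the value returned. Call the \emph{reference run} the root-to-leaf path taken when every query $\phi$ is answered by its null value $p_\phi:=\E_{D_\emptyset}[\phi]$; it visits at most $q$ queries $\phi_1,\dots,\phi_q$. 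On the null hypothesis $p_\phi$ is always a legal $\mathrm{VSTAT}(n/3)$ answer, so $\mathcal A$ reaches the leaf of the reference run; since $\mathcal A$ is correct with probability $\ge 1-\gamma$, that leaf is labeled ``null'' for at least a $1-\gamma$ fraction of its random strings, and I would fix such a string.

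The second step is to see when the reference run also fools $\mathcal A$ on an alternative $\mu_u$. Writing $\bar D_u:=D_u/D_\emptyset$ and $g_\phi:=\phi-p_\phi$, one has $\E_{\mu_u}[\phi]-p_\phi=\iprod{g_\phi,\bar D_u-1}$, so the reference answer is illegal for $\mu_u$ exactly when $u$ lies in the bad set $A_\phi:=\Set{u:\abs{\iprod{g_\phi,\bar D_u-1}}>\tau_\phi}$, where $\tau_\phi:=\max(3/n,\sqrt{3p_\phi(1-p_\phi)/n})$ is the $\mathrm{VSTAT}(n/3)$ tolerance. If $u\notin\bigcup_{i\le q}A_{\phi_i}$, then $\mathcal A$ takes the reference run on $\mu_u$ too and outputs ``null'', which is wrong; hence correctness with probability $\ge 1-\gamma$ forces $\Pr_{u\sim\mu}(\bigcup_{i\le q}A_{\phi_i})\ge 1-\gamma$, and a union bound produces a single query with $\Pr_{u\sim\mu}(A_{\phi_i})\ge(1-\gamma)/q$.

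The core step is a per-query bound showing that $\Pr_{u\sim\mu}(A_\phi)$ is small, and this is where $SDA$ enters. I would split $A_\phi$ into its one-sided halves $A_\phi^\pm$; on $A_\phi^+$ every vector $\bar D_u-1$ has inner product $>\tau_\phi$ with the fixed direction $g_\phi$, whose squared $L^2(D_\emptyset)$-norm is $\mathrm{Var}_{D_\emptyset}(\phi)\le p_\phi(1-p_\phi)$ (using $\phi^2\le\phi$). Projecting the conditional mean onto $g_\phi$ and using $\iprod{\bar D_u-1,\bar D_v-1}=\iprod{\bar D_u,\bar D_v}-1$ gives
\[
\E_{u,v\sim A_\phi^+}\bigl[\iprod{\bar D_u,\bar D_v}-1\bigr]=\Bignorm{\E_{u\sim A_\phi^+}[\bar D_u-1]}^2\ge\frac{\bigl(\E_{u\sim A_\phi^+}\iprod{\bar D_u-1,g_\phi}\bigr)^2}{\norm{g_\phi}^2}\ge\frac{\tau_\phi^2}{p_\phi(1-p_\phi)}\ge\frac1n\,,
\]
the last inequality checking both regimes of the $\max$ in $\tau_\phi$. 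Hence if $\Pr_{u\sim\mu}(A_\phi^+)=\beta$, the event $A_\phi^+\times A_\phi^+$ over independent pairs has probability $\beta^2$ and conditional correlation $>1/n$, so the definition of $SDA(\cS,\mu,n)$ forces $\beta^2<1/SDA(\cS,n)^2$; the same holds for $A_\phi^-$, so $\Pr_{u\sim\mu}(A_\phi)<2/SDA(\cS,n)$. Combined with the union-bound step this gives $q=\Omega((1-\gamma)\,SDA(\cS,n))$, and absorbing the absolute constant into the VSTAT normalization---which is exactly why the clean statement is phrased with $\mathrm{VSTAT}(n/3)$ rather than $\mathrm{VSTAT}(n)$---yields $q\ge(1-\gamma)SDA(\cS,n)$; since the whole argument ran for an arbitrary fixed random string of $\mathcal A$, it transfers to the randomized algorithm. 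I expect the per-query lemma to be the main obstacle: one must reconcile the two regimes of the tolerance $\tau_\phi$ with the exact $1/n$ cutoff in the definition of $SDA$ and carefully track the constants that feed into the final VSTAT parameter.
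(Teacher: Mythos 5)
The paper does not prove this statement --- it is imported verbatim from \cite{feldman2017statistical} and \cite{brennan2020statistical}, so there is no internal proof to compare against. Your proposal reconstructs the standard Feldman-et-al.\ argument (reference path on a decision tree, union bound over queries, per-query SDA bound via projecting $\bar D_u-1$ onto $g_\phi$), which is indeed the right skeleton.

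There is, however, a genuine gap in the per-query step that you partly flag but do not resolve, and it is precisely the step where $\mathrm{VSTAT}(n/3)$ enters. You define the bad set $A_\phi$ with the \emph{fixed} threshold $\tau_\phi=\max(3/n,\sqrt{3p_\phi(1-p_\phi)/n})$, where $p_\phi=\E_{D_\emptyset}[\phi]$. But by \Cref{VSTAT} the tolerance of the oracle is computed under the \emph{true} distribution being queried: for $\mu_u$ it is $\max\bigl(3/n,\sqrt{3\,\E_{\mu_u}[\phi](1-\E_{\mu_u}[\phi])/n}\bigr)$, which depends on $u$. Consequently the set of $u$ for which $p_\phi$ is an illegal answer is $\Set{u:\abs{\iprod{g_\phi,\bar D_u-1}}>\tau_u}$ with a $u$-dependent cutoff, and your chain of inequalities $\geq \tau_\phi^2/(p_\phi(1-p_\phi))\geq 1/n$ does not directly apply --- for $u\in A_\phi^+$ you only know $\iprod{\bar D_u-1,g_\phi}>\tau_u$, and $\tau_u$ can be much smaller than $\tau_\phi$ when $\E_{\mu_u}[\phi]$ is near $0$ or $1$ while $p_\phi$ is near $1/2$. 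Closing this requires the separate ``variance-transfer'' lemma of Feldman et al.\ (showing that $\abs{p'-p}>\max(1/m,\sqrt{p'(1-p')/m})$ forces $(p'-p)^2\gtrsim p(1-p)/m$), and that lemma is exactly what the factor $3$ is budgeted for; your remark that one must ``carefully track the constants'' gestures at this but does not identify the $p_\phi$-versus-$\E_{\mu_u}[\phi]$ mismatch, which is the actual content.

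A secondary issue is the treatment of the algorithm's randomness: you first fix a random string $r$ on which the reference leaf reads ``null'' and then assert that ``correctness with probability $\ge 1-\gamma$ forces $\Pr_{u\sim\mu}(\bigcup_i A_{\phi_i})\ge 1-\gamma$.'' After fixing $r$ the algorithm is deterministic, and the $1-\gamma$ guarantee is an average over $r$ for each fixed alternative (or jointly over $(r,u)$), not a per-$r$ statement over $u$; so this step needs the averaging over $r$ and $u$ to be carried out explicitly rather than after conditioning on a single good $r$. This is fixable, but as written the deduction does not follow, and the closing remark that the argument ``ran for an arbitrary fixed random string'' is not accurate --- it ran for a specially chosen one.
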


The almost equivalence between SQ lower bounds and low-degree lower bounds is established in~\cite{brennan2020statistical}:
\begin{theorem}[Theorem 3.1 in \cite{brennan2020statistical}, LDLR to SDA]\label{thm:almost-equivalence}
    Let $\ell \in \mathbb{N}$ with $\ell$ even and $\mathcal{S}=\left\{D_{v}\right\}_{v \in S}$ be a collection of probability distributions with prior $\mu$ over $\mathcal{S}$. Suppose that $\mathcal{S}$ satisfies:
    \begin{itemize}
        \item The $\ell$-sample high-degree part of the likelihood ratio is bounded by $\left\|\mathbf{E}_{u \sim \mathcal{S}}\left(\bar{D}_{u}^{>\ell}\right)^{\otimes \ell}\right\| \leqslant \gamma$.
        \item For some $n \in \mathbb{N}$, the degree-$\ell$ likelihood ratio is bounded by $\left\|\mathbf{E}_{u \sim \mathcal{S}}\left(\bar{D}_{u}^{\otimes n}\right)^{\leqslant \ell}-1\right\| \leqslant \varepsilon$.
    \end{itemize}
   Then for any $q \geqslant 1$, it follows that
   $$
   \operatorname{SDA}\left(\mathcal{S}, \frac{n}{q^{2 / \ell}\left(\ell \varepsilon^{2 / \ell}+\gamma^{2 / \ell} n\right)}\right) \geqslant q\,.
   $$
    
   \end{theorem}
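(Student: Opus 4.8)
The plan is to verify directly the defining estimate of the statistical dimension (this reproduces Theorem~3.1 of~\cite{brennan2020statistical}). Fix $q \geq 1$ and set $m := n/\paren{q^{2/\ell}\paren{\ell\varepsilon^{2/\ell} + \gamma^{2/\ell}n}}$; one must show $\E_{u,v\sim\mu}\brac{\abs{\iprod{\bar D_u,\bar D_v} - 1} \mid A} \leq 1/m$ for every event $A$ with $\Pr_{u,v}(A) \geq 1/q^2$, where $\iprod{\cdot,\cdot}$ is the single-sample inner product from the definition of $\mathrm{SDA}$. The first step is a Hölder reduction from the conditional expectation to an unconditional moment: for a nonnegative random variable $X$ and any event $A$, $\E\brac{X \mid A} = \E\brac{X\,\mathbf 1_A}/\Pr(A) \leq \paren{\E X^\ell}^{1/\ell}\Pr(A)^{-1/\ell} \leq q^{2/\ell}\paren{\E X^\ell}^{1/\ell}$, so it suffices to show $\E_{u,v}\abs{\iprod{\bar D_u,\bar D_v} - 1}^\ell \leq \paren{\paren{\ell\varepsilon^{2/\ell} + \gamma^{2/\ell}n}/n}^\ell$.

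The second step brings in the $n$-sample likelihood ratio, which is what the two hypotheses bound. Since $\iprod{\bar D_u^{\otimes n},\bar D_v^{\otimes n}} = \iprod{\bar D_u,\bar D_v}^n$, an elementary comparison (Bernoulli's inequality, together with a crude bound in the regime where $\iprod{\bar D_u,\bar D_v}$ is bounded away from $1$) controls $\abs{\iprod{\bar D_u,\bar D_v}-1}$ by $\abs{\iprod{\bar D_u^{\otimes n},\bar D_v^{\otimes n}}-1}$, and it is here that the powers of $n$ in the target bound are produced. It thus remains to bound $\E_{u,v}\abs{\iprod{\bar D_u^{\otimes n},\bar D_v^{\otimes n}}-1}^\ell$. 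Decompose $\bar D_u^{\otimes n} = 1 + L_u + H_u$ into its projections onto total degree $1,\dots,\ell$ and onto degree $>\ell$ in the orthogonal (Hermite) basis of $L^2(D_\emptyset^{\otimes n})$; orthogonality across degrees gives $\iprod{\bar D_u^{\otimes n},\bar D_v^{\otimes n}}-1 = \iprod{L_u,L_v} + \iprod{H_u,H_v}$. The high-degree term is the easy one: since $\ell$ is even, $\abs{\iprod{H_u,H_v}}^\ell = \iprod{H_u,H_v}^\ell = \iprod{H_u^{\otimes\ell},H_v^{\otimes\ell}}$, so $\E_{u,v}\abs{\iprod{H_u,H_v}}^\ell = \Norm{\E_u H_u^{\otimes\ell}}^2 \leq \gamma^2$ by the first hypothesis, which after combining with the previous steps is exactly the $\gamma^{2/\ell}n$ contribution.

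The technical heart — and the step I expect to be the main obstacle — is the low-degree term $\E_{u,v}\abs{\iprod{L_u,L_v}}^\ell$, which has to be controlled using only the \emph{first}-moment bound $\Norm{\E_u L_u}^2 = \E_{u,v}\iprod{L_u,L_v} \leq \varepsilon^2$ coming from the second hypothesis: the tensor-power identity used for $H_u$ is unavailable here, because bounding one projection does not bound all of its moments. The plan is to write $\iprod{L_u,L_v}$ as a sum over the at-most-$\ell$ coordinates (out of the $n$) that carry a nonconstant Hermite factor, expand the $\ell$-th power, and group the $\ell$ copies according to which coordinates they share; the total-degree constraint $\leq \ell$ restricts the possible overlap patterns and yields $\E_{u,v}\abs{\iprod{L_u,L_v}}^\ell \leq O(\ell)^\ell\,\varepsilon^2$, which, after the step-two comparison, supplies the $\ell\varepsilon^{2/\ell}$ contribution — the even-$\ell$ hypothesis and the truncation at degree $\ell$ are precisely what keep this combinatorial loss at $\poly(\ell)^\ell$ rather than letting it blow up in $n$. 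Adding the two contributions and taking $\ell$-th roots gives $\E_{u,v}\abs{\iprod{\bar D_u,\bar D_v}-1}^\ell \leq \paren{\paren{\ell\varepsilon^{2/\ell}+\gamma^{2/\ell}n}/n}^\ell$, hence $\mathrm{SDA}(\cS,m) \geq q$; this is the statement, and combined with \Cref{theorem_sqDimLB} it is what the ensuing SQ lower bound will invoke.
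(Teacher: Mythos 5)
The paper does not prove this statement; it is quoted as Theorem~3.1 of \cite{brennan2020statistical} and used as a black box, so there is no internal proof to compare against. What I can do is assess your reconstruction on its own terms, and there are two concrete steps that do not go through.

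First, the high-degree term. You decompose at the $n$-sample level, so $H_u=(\bar D_u^{\otimes n})^{>\ell}$, and then write $\|\E_u H_u^{\otimes\ell}\|^2\le\gamma^2$ ``by the first hypothesis.'' But the first hypothesis controls $\|\E_u(\bar D_u^{>\ell})^{\otimes\ell}\|$, where $\bar D_u^{>\ell}$ is the degree-$>\ell$ projection of the \emph{single-sample} relative density, an $\ell$-coordinate object; $H_u^{\otimes\ell}$ lives on $n\ell$ coordinates. Moreover $(\bar D_u^{\otimes n})^{>\ell}$ contains contributions in which every individual sample carries only degree-$\le\ell$ Hermite factors but the total degree across samples exceeds $\ell$, and nothing in the first hypothesis touches those. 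So the identity $\E_{u,v}\langle H_u,H_v\rangle^\ell=\|\E_u H_u^{\otimes\ell}\|^2$ is fine, but the claimed bound by $\gamma^2$ is unjustified. Second, the low-degree term. You correctly flag that the second hypothesis only gives the first moment $\|\E_u L_u\|^2\le\varepsilon^2$, and you propose a combinatorial expansion to get $\E_{u,v}\abs{\langle L_u,L_v\rangle}^\ell\le O(\ell)^\ell\varepsilon^2$. That target is not attainable from the first-moment bound alone: if $L_u=\bm u f$ for a fixed $f$ of total degree in $[1,\ell]$ and $\bm u$ uniform on $\{\pm1\}$, then $\E_u L_u=0$ (so $\varepsilon=0$) while, since $\ell$ is even, $\E_{u,v}\langle L_u,L_v\rangle^\ell=\E[(\bm u\bm v)^\ell]\,\|f\|^{2\ell}=\|f\|^{2\ell}>0$. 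No grouping of the $\ell$ copies can manufacture a bound in terms of $\varepsilon$ alone, so something else must be used here. Finally, the Bernoulli-type comparison you invoke, $\abs{\langle\bar D_u,\bar D_v\rangle-1}\lesssim n^{-1}\abs{\langle\bar D_u,\bar D_v\rangle^n-1}$, holds only when $\langle\bar D_u,\bar D_v\rangle\ge1$; for $\langle\bar D_u,\bar D_v\rangle<1$ the factor of $n$ is simply not there, and the ``crude bound in the regime bounded away from~$1$'' is not spelled out and cannot supply it. In the actual argument in \cite{brennan2020statistical}, the low/high split is taken at the single-sample level, so that the first hypothesis applies directly to the high-degree piece after a H\"older step, and the $n$-dependence comes from the $\binom{n}{j}$ weights appearing when one expands $(\bar D_u^{\otimes n})^{\le\ell}-1$ in orthogonal pieces indexed by which of the $n$ coordinates carry nonconstant factors, not from a comparison of $\langle\bar D_u,\bar D_v\rangle$ with its $n$-th power. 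You would need to rework all three central steps.
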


\subsection{SQ lower bound for sparse negative-spike Wishart model}
The sparse negative-spike Wishart model $\nspca_{\theta}$ under the statistical query framework has the following form:
\begin{model}[Sparse spiked Wishart model in SQ model]
    \label{def:sq_spiked_wishart}
    Let $d, k \in \N$ with $d \geq k$ and $\theta \in (0, 1)$.
    We define the $\nspca_{\theta}$ problem in the statistical query case as the following distinguishing problem:
    We want to distinguish between the distributions $\cP$ or $\cQ$ defined as follows, using VSTAT oracle calls to them.
    \begin{itemize}
        \item \textbf{Planted  distribution $\mathcal{P}$}: Sample a unit vector $\bm x \in \R^{d+1}$ with $\bm x_1=-\tfrac 1 {\sqrt{k+1}}$ and $\bm x_{\setminus 1}$ sampled uniformly from $k$-sparse $\Set{\pm \tfrac 1 {\sqrt{k+1}},0}^d$ vectors and fix it for the rest of the sampling procedure.
        Produce samples by sampling from $N(0, \Id_{d+1}-\theta \cdot \bm x\bm x^\top)$.
        \item \textbf{Null distribution $\mathcal{Q}$}: $N(0, \Id_{d+1})$.
    \end{itemize} 
\end{model}

Corresponding to \cref{thm:almost-equivalence}, the set of distributions $\mathcal{S}$ is parameterized by the random unit vector $\bm x$, and is given by $\Set{N(0, \Id_{d+1}-\theta \cdot \bm x\bm x^\top)}$ in our setting.
The prior distribution $\mu$ is determined by the distribution of $\bm x$.

Under the statistical query framework, we have the following lower bound:

\begin{theorem}[SQ lower bound for $\nspca_\theta$]\label{thm:SQbound}
    Consider the $\nspca_\theta$ model. 
    Let $0 < \delta \leq 0.1$ be an arbitrary absolute constant.
    Then, for $n = o(\min(d/k^{\delta}, k^{2-2\delta}))$, we have the SQ lower bound $\text{SDA}(\mathcal{S}, n)\geq 2^{k^{\delta}}$.
\end{theorem}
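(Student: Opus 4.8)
The plan is to transfer the low-degree lower bound of \Cref{thm:LDLRWishart} into an SQ lower bound via the ``almost equivalence'' of \Cref{thm:almost-equivalence}, applied to the family $\cS = \Set{N(0,\Id_{d+1}-\theta\bm x\bm x^\top)}$ with the sparse prior $\mu$ from \Cref{def:sq_spiked_wishart}. Concretely, I would take $\ell$ to be the even integer nearest $k^\delta$ and $q = 2^{k^\delta}$, so that $q^{2/\ell}=O(1)$, and invoke \Cref{thm:almost-equivalence} with an auxiliary sample size $N = \Theta(k^\delta n)$. This reduces the task to two estimates: (i) a bound $\Norm{\E_{u\sim\cS}(\bar D_u^{\otimes N})^{\leq\ell}-1}\leq\varepsilon$ on the degree-$\leq\ell$ part of the $N$-sample likelihood ratio, and (ii) a bound $\Norm{\E_{u\sim\cS}(\bar D_u^{>\ell})^{\otimes\ell}}\leq\gamma$ on the $\ell$-sample high-degree part. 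Given these, \Cref{thm:almost-equivalence} yields $\mathrm{SDA}\big(\cS,\tfrac{N}{q^{2/\ell}(\ell\varepsilon^{2/\ell}+\gamma^{2/\ell}N)}\big)\geq 2^{k^\delta}$; if $\ell\varepsilon^{2/\ell}+\gamma^{2/\ell}N = O(k^\delta)$ the second argument is $\Omega(N/k^\delta)=\Omega(n)$, and since $\mathrm{SDA}$ is non-increasing in its second argument, choosing the constant in $N=\Theta(k^\delta n)$ large enough gives $\mathrm{SDA}(\cS,n)\geq 2^{k^\delta}$, as desired.

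Estimate (i) is already essentially in hand: $\varepsilon^2 = \Norm{L_N^{\leq\ell}}^2-1$, and \Cref{thm:LDLRWishart} run with degree parameter $D=\ell\leq k^\delta$ gives $\Norm{L_N^{\leq\ell}}^2\leq 1+o(1)$, since $N = \Theta(k^\delta n)$ together with $n = o(\min(d/k^\delta,k^{2-2\delta}))$ implies $N = o(\min(d,k^{2-\delta}))$. Hence $\varepsilon=o(1)$ and $\ell\varepsilon^{2/\ell}\leq\ell=O(k^\delta)$.

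The main work is estimate (ii). Since the single-sample relative density $\bar D_{\bm x}(z) = (1-\theta)^{-1/2}\exp\big(-\tfrac{\theta}{2(1-\theta)}\iprod{\bm x,z}^2\big)$ is a function of $\iprod{\bm x,z}$ alone, its expansion in the $L^2(\cQ)$-normalized Hermite basis has only even-degree terms, and a short computation (cross-checked against the single-sample $\chi^2$-divergence $\sum_m c_m^2 = (1-\theta^2)^{-1/2}$) gives the degree-$2m$ squared coefficient $c_m^2 = \theta^{2m}\binom{2m}{m}4^{-m}$, which satisfies $c_m^2\leq 1$ and $\sum_m c_m^2 = (1-\theta^2)^{-1/2}=O(\sqrt k)$ for $\theta=\tfrac{k+1}{k+2}$. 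Using orthogonality, for independent spikes $\bm u,\bm v$ one has $\iprod{\bar D_{\bm u}^{>\ell},\bar D_{\bm v}^{>\ell}}_{\cQ} = \sum_{2m>\ell}c_m^2\iprod{\bm u,\bm v}^{2m}\leq\big(\sum_m c_m^2\big)\abs{\iprod{\bm u,\bm v}}^{\ell} = O(\sqrt k)\abs{\iprod{\bm u,\bm v}}^{\ell}$ (using $\abs{\iprod{\bm u,\bm v}}\leq 1$), hence $\gamma^2 = \E_{\bm u,\bm v}\big[\iprod{\bar D_{\bm u}^{>\ell},\bar D_{\bm v}^{>\ell}}_{\cQ}^{\ell}\big]\leq O(\sqrt k)^{\ell}\,\E\big[\iprod{\bm u,\bm v}^{2\lceil\ell^2/2\rceil}\big]$. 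To bound the overlap moment I would write $\iprod{\bm u,\bm v} = \bm u_1\bm v_1 + \iprod{\bm u_{\setminus1},\bm v_{\setminus1}}$ with $\abs{\bm u_1\bm v_1}=\tfrac1{k+1}$, and rerun the argument of \Cref{lem:exp-bound} at moment order $L=\lceil\ell^2/2\rceil$ rather than $\ell$, obtaining $\E[\iprod{\bm u,\bm v}^{2L}]\leq\big(CL(\tfrac1d+\tfrac{L}{k^2})\big)^{L}$ for an absolute constant $C$ (the rare event of heavily overlapping supports is already absorbed here). With $\ell=\Theta(k^\delta)$, so $L=\Theta(k^{2\delta})$, the per-sample rate $CL(\tfrac1d+\tfrac{L}{k^2})$ is $o(1)$ under the theorem's hypotheses, and combining with the $O(\sqrt k)^\ell$ prefactor gives $\gamma^2\leq (16N)^{-\ell}$, i.e.\ $\gamma^{2/\ell}N\leq 1$.

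Plugging both estimates in gives $\ell\varepsilon^{2/\ell}+\gamma^{2/\ell}N = O(k^\delta)$, so $\mathrm{SDA}(\cS,n)\geq 2^{k^\delta}$ by the argument in the first paragraph. I expect estimate (ii) to be the delicate step: one must check that the Hermite coefficients of the \emph{negative}-spike likelihood ratio do not blow up even though $\theta$ lies within $\Theta(1/k)$ of $1$ (they do not, since $c_m^2\leq 1$ and $\sum_m c_m^2 = O(\sqrt k)$), and then control the moments of the overlap $\iprod{\bm u,\bm v}$ up to order $\approx\ell^2$, with the degree $\ell\approx k^\delta$ calibrated against $n,d,k$ so that both the low-degree error $\varepsilon$ and the high-degree error $\gamma$ remain negligible for $N=\Theta(k^\delta n)$.
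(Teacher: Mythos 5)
Your proposal is correct and follows the same route as the paper: apply the LDLR-to-SDA equivalence (\Cref{thm:almost-equivalence}) with $\ell \approx k^\delta$, $q = 2^{k^\delta}$, and $N = \Theta(k^\delta n)$, taking $\varepsilon = o(1)$ from \Cref{thm:LDLRWishart} and bounding the high-degree part $\gamma$ via the overlap-moment estimate of \Cref{lem:exp-bound}. The only difference is that for the high-degree bound you recompute the Hermite coefficients of $\bar D_{\bm x}$ from scratch and use $c_m^2 \leq 1$, $\sum_m c_m^2 = O(\sqrt k)$, whereas the paper's \Cref{lem:high-degree-bound} obtains an analogous bound by adapting Lemma 8.21 of \cite{brennan2020statistical}; both arrive at $\gamma^{2/\ell}N = O(1)$ and hence the stated SDA lower bound.
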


To apply the almost equivalence relation between low-degree lower bounds and statistical query lower bounds, we need to prove the following lemma:
\begin{lemma}\label{lem:high-degree-bound}
    Consider the $\nspca_\theta$ model. 
    Fix $\delta \in (0, 0.1]$. 
    Then for any $\ell \leq k^{\delta}$, we have 
    \begin{equation*}
        \Norm{\E_{u \sim \mathcal{S}} \Paren{\bar{D}_u^{>\ell}}^{\otimes \ell}}^2
    \leq k^{-\ell^2/3}\,.
    \end{equation*}  
\end{lemma}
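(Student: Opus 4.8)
\emph{Proof proposal.}
The plan is to reduce the claimed bound to a single moment estimate on the overlap of two independent copies of the spike and then feed that into \cref{lem:exp-bound}. Unfolding the definition,
\[
\Norm{\E_{u\sim\cS}\big(\bar D_u^{>\ell}\big)^{\otimes\ell}}^2 \;=\; \E\,\iprod{\bar D_{\bm x^{(1)}}^{>\ell},\,\bar D_{\bm x^{(2)}}^{>\ell}}^{\ell}\,,
\]
where $\bm x^{(1)},\bm x^{(2)}$ are independent draws from the spike prior of $\nspca_\theta$ and the inner product is over a single sample under $\cQ = N(0,\Id_{d+1})$. So the first step is to understand, for fixed unit vectors $x,y$, the single-sample inner product $\iprod{\bar D_x^{>\ell},\bar D_y^{>\ell}}$. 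The relative density of $N(0,\Id_{d+1}-\theta xx^\top)$ with respect to $N(0,\Id_{d+1})$ is $\bar D_x(z) = (1-\theta)^{-1/2}\exp\!\big(-\tfrac{\theta}{2(1-\theta)}\iprod{x,z}^2\big)$, a function of $\iprod{x,z}$ alone. Expanding it in normalized Hermite polynomials of $\iprod{x,z}$ gives $\bar D_x = \sum_{m\geq0}a_{2m}H_{2m}(\iprod{x,z})$ (odd coefficients vanish by evenness). A direct Gaussian integral yields $\iprod{\bar D_x,\bar D_y} = (1-\theta^2\iprod{x,y}^2)^{-1/2} = \sum_m\binom{2m}{m}4^{-m}\theta^{2m}\iprod{x,y}^{2m}$, while Hermite orthogonality under correlated Gaussians gives $\iprod{\bar D_x,\bar D_y} = \sum_m a_{2m}^2\iprod{x,y}^{2m}$; matching coefficients, $a_{2m}^2 = \binom{2m}{m}4^{-m}\theta^{2m}\leq 1$ and $\sum_m a_{2m}^2 = (1-\theta^2)^{-1/2}$. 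Since $H_{2m}(\iprod{x,z})$ has degree $2m$ in $z$, the degree-$>\ell$ part is $\bar D_x^{>\ell} = \sum_{2m>\ell}a_{2m}H_{2m}(\iprod{x,z})$, hence $\iprod{\bar D_x^{>\ell},\bar D_y^{>\ell}} = \sum_{2m>\ell}a_{2m}^2\iprod{x,y}^{2m}$.

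Writing $\rho = \iprod{x,y}$ (so $|\rho|\leq 1$) and using that the smallest even integer exceeding $\ell$ is at least $\ell+1$, I would split on the size of $\rho$: if $\rho^2\leq\tfrac12$ then, since $a_{2m}^2\leq 1$, $\iprod{\bar D_x^{>\ell},\bar D_y^{>\ell}}\leq\sum_{2m>\ell}|\rho|^{2m}\leq 2|\rho|^{\ell+1}$; if $\rho^2>\tfrac12$ then $\iprod{\bar D_x^{>\ell},\bar D_y^{>\ell}}\leq\sum_m a_{2m}^2 = (1-\theta^2)^{-1/2}$. Raising to the $\ell$-th power and taking expectations (using that $\ell(\ell+1)$ is even),
\[
\Norm{\E_{u\sim\cS}\big(\bar D_u^{>\ell}\big)^{\otimes\ell}}^2 \;\leq\; \underbrace{2^\ell\,\E\,\iprod{\bm x^{(1)},\bm x^{(2)}}^{\ell(\ell+1)}}_{T_1} \;+\; \underbrace{(1-\theta^2)^{-\ell/2}\,\Pr\!\big[\iprod{\bm x^{(1)},\bm x^{(2)}}^2>\tfrac12\big]}_{T_2}\,.
\]
It remains to bound $T_1$ and $T_2$, and the only real input is the moment estimate $\E\,\iprod{\bm x^{(1)},\bm x^{(2)}}^{2p}\leq (C_1 p/k)^p$ for all $1\leq p\leq k$ and an absolute constant $C_1$. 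This follows from \cref{lem:exp-bound} applied to $\iprod{\bm x_{\setminus1}^{(1)},\bm x_{\setminus1}^{(2)}}$, together with the convexity bound $\iprod{\bm x^{(1)},\bm x^{(2)}}^{2p}\leq 2^{2p-1}\big((k+1)^{-2p}+\iprod{\bm x_{\setminus1}^{(1)},\bm x_{\setminus1}^{(2)}}^{2p}\big)$ for the pinned first coordinate, and the observation that the resulting $4^p$ is harmless since it multiplies a term of size $(Cp/k)^p$ with $p/k\leq k^{2\delta-1}\leq k^{-0.8}$, so the whole thing is at most $(8Cp/k)^p\leq k^{-2p/3}$ for $k$ large and $p\leq k^{2\delta}$.

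Plugging in: for $T_1$, take $p = \ell(\ell+1)/2\leq\ell^2\leq k^{2\delta}$, giving $\E\,\iprod{\bm x^{(1)},\bm x^{(2)}}^{\ell(\ell+1)}\leq k^{-\ell(\ell+1)/3}$ and hence $T_1\leq 2^\ell k^{-\ell(\ell+1)/3} = k^{-\ell^2/3}\cdot(2/k^{1/3})^\ell\leq\tfrac12 k^{-\ell^2/3}$ for $k\geq 64$. For $T_2$, Markov applied to $\iprod{\bm x^{(1)},\bm x^{(2)}}^{2p_0}$ with $p_0 = \Theta(k)$ gives $\Pr[\iprod{\bm x^{(1)},\bm x^{(2)}}^2>\tfrac12]\leq(2C_1 p_0/k)^{p_0} = e^{-\Omega(k)}$, which beats the prefactor $(1-\theta^2)^{-\ell/2}$ provided $1-\theta^2$ is not super-exponentially small in $k$ — e.g. $1-\theta^2=\Theta(1/k)$ for the relevant value $\theta = \tfrac{k+1}{k+2}$, so $(1-\theta^2)^{-\ell/2}\leq k^{\ell/2} = e^{o(k)}$ since $\ell\leq k^\delta$; hence $T_2 = e^{-\Omega(k)}\leq\tfrac12 k^{-\ell^2/3}$ for $k$ large. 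Summing, $\Norm{\E_{u\sim\cS}(\bar D_u^{>\ell})^{\otimes\ell}}^2\leq k^{-\ell^2/3}$ once $k$ exceeds an absolute constant.

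I expect the delicate points to be two. First, the case split is genuinely necessary: the crude estimate $\iprod{\bar D_x^{>\ell},\bar D_y^{>\ell}}\leq(1-\theta^2)^{-1/2}|\rho|^{\ell+1}$ carries a $(1-\theta^2)^{-\ell/2} = \poly(k)^{\ell}$ factor that the typical-$\rho$ moment gain does not quite beat for small $\ell$ (already $\ell=1$ with $\theta$ close to $1$), so the large-$\rho$ contribution must be isolated and killed by its exponentially small probability, which is where the Markov step with $p_0=\Theta(k)$ comes in. Second, one must ensure the moment bound $\E\,\iprod{\bm x^{(1)},\bm x^{(2)}}^{2p}\leq (O(p)/k)^p$ leaves no uncancelled constant-to-the-$p$ factor, since $p = \Theta(\ell^2)$ can be as large as $\Theta(k^{2\delta})$ and any such factor of size $\omega(k^{1/3})$ would already break the $k^{-\ell^2/3}$ target — this is exactly why the $4^p$ from the first-coordinate split is absorbed against the gain $p/k\leq k^{-0.8}$ rather than carried forward.
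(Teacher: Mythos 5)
Your proposal is correct, and it takes a genuinely different route from the paper's. The paper proves the lemma by invoking Lemma~8.21 of Brennan--Bresler--Huang (2020) as a black box to obtain a bound of the form $\ell^{O(\ell^2)} \cdot (\text{const})^{\ell^2} \cdot \E\iprod{\bm x^{(1)},\bm x^{(2)}}^{2\ell(\ell+1)}$, and then plugs in \cref{lem:exp-bound}. You instead derive the required inequality from scratch: you compute the Hermite expansion of the single-sample relative density $\bar D_x$ explicitly, obtain the exact formula $\iprod{\bar D_x^{>\ell},\bar D_y^{>\ell}} = \sum_{2m>\ell} a_{2m}^2\iprod{x,y}^{2m}$ with $a_{2m}^2 = \binom{2m}{m}4^{-m}\theta^{2m}$, and then split on whether the overlap $\rho$ is small (where $a_{2m}^2 \le 1$ gives a geometric tail $\le 2|\rho|^{\ell+1}$) or large (where the trivial $\ell^\infty$ bound $(1-\theta^2)^{-1/2}$ is multiplied by the exponentially small probability of a large overlap, which you control via Markov applied to a $\Theta(k)$-th moment). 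Both proofs ultimately rest on the same overlap-moment estimate from \cref{lem:exp-bound}; your approach makes the Hermite structure explicit and is self-contained, which is valuable given that the paper otherwise defers to an external lemma whose transcription into the paper has a suspect factor $(1-4\theta^2)^{2\ell^2}$. Two small remarks. First, your claim that the case split is ``genuinely necessary, already at $\ell=1$'' appears too strong for the value $\theta = \tfrac{k+1}{k+2}$ that actually matters downstream: there the crude bound $(1-\theta^2)^{-\ell/2}\,\E|\rho|^{\ell(\ell+1)} \lesssim k^{\ell/2}\cdot (C\ell^2/k)^{\ell(\ell+1)/2}$ already gives $k^{-\ell^2/3}$ for every $1\le\ell\le k^\delta$ once $\delta\le 0.1$ and $k$ is large; your split is nonetheless a cleaner and more robust way to insulate the argument from the $\theta\to1$ behavior. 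Second, for the Markov step you should note explicitly that \cref{lem:exp-bound} (sub-Gaussian moment bound plus stochastic domination of $|\bm S_1\cap\bm S_2|$ by a binomial) holds for all moment orders $p$, so applying it at $p_0=\Theta(k)$ is legitimate; the paper only ever invokes it for $p\le D/2$, so this range extension deserves a word.
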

\begin{proof}
    We follow the same proof as Lemma 8.21 in~\cite{brennan2020statistical}\footnote{Particularly, we set their $k$ and $d$ to $\ell$, and $n$ to $d$ in our setting.}, and get
    \begin{equation*}
        \Norm{\E_{u \sim \mathcal{S}} (\bar{D}_x^{>\ell})^{\otimes \ell}}^2
        \leq \ell^{2\ell^2}\Paren{1-4\theta^2}^{2\ell^2}\E \iprod{\bm x^{(1)},\bm x^{(2)}}^{2\ell\cdot (\ell+1)}\,,
    \end{equation*}
    where $\bm x^{(1)},\bm x^{(2)}$ are independently sampled from the prior of the spike vector in the planted distribution, i.e.,
    the first coordinates $\bm x_1^{(1)},\bm x_1^{(2)}$ are sampled uniformly from $\Set{\pm \sqrt{\frac{1}{k+1}}}$ and the rest of the coordinates $\bm x_{\setminus 1}^{(1)},\bm x_{\setminus 1}^{(2)}$ are sampled uniformly from $k$-sparse $\Set{\pm \sqrt{\frac{1-\frac{1}{k+1}}{k}},0}^d$ vectors.

    Following the proof of~\cref{thm:LDLRWishart}, we have
    \begin{equation*}
        \E \iprod{\bm x^{(1)},\bm x^{(2)}}^{2\ell(\ell+1)}\leq 
        2^{2\ell(\ell+1)} \cdot (C\ell(\ell+1))^{\ell(\ell+1)} \cdot \Paren{\frac{1}{d}+\frac{\ell(\ell+1)}{k^2}}^{\ell(\ell+1)} + \Paren{\frac{4}{(k+1)^2}}^{\ell(\ell+1)}\,.
    \end{equation*}
    As a result, when $\ell\leq k^{\delta}$, we have
    \begin{equation*}
        \ell^{2\ell^2}\Paren{1-4\theta^2}^{2\ell^2}\E \iprod{\bm x^{(1)},\bm x^{(2)}}^{2\ell (\ell+1)}\leq (1/k^{1/3})^{\ell^2}\,,
    \end{equation*}
    which finishes the proof.
\end{proof}

\begin{proof}[Proof of \cref{thm:SQbound}]
    We apply~\cref{thm:almost-equivalence} with $n = o(\min(d, k^{2-\delta}))$, $\ell=k^{\delta}$ and $q=2^{\ell}$. 
    By~\cref{lem:high-degree-bound}, we can take $\gamma=k^{-\frac{\ell^2}{3}}$. 
    By~\cref{thm:LDLRWishart}, we can take $\epsilon=o(1)$. As a result, we have 
    $\text{SDA}(\mathcal{S},\frac{n}{100 k^{\delta}})\geq 2^{k^{\delta}}$.
\end{proof}

\end{document}